  \providecommand\BibTeX{{%
    \normalfont B\kern-0.5em{\scshape i\kern-0.25em b}\kern-0.8em\TeX}}}
\newtheorem{definition}{Definition}
\newtheorem{theorem}{Theorem}
\newtheorem{corollary}{Corollary}
\newcommand{\partitle}[1]{\smallskip \noindent \textbf{#1.}}
\begin{document}

\title{DPAR: Decoupled Graph Neural Networks with Node-Level Differential Privacy}

\author{Qiuchen Zhang}
\orcid{0002-7054-1983}
\affiliation{%
  \institution{Emory University}
  \streetaddress{201 Dowman Dr}
  \city{Atlanta}
  \state{GA}
  \country{USA}
  \postcode{30322}
}
\email{zhangqiuchen0929@gmail.com}

\author{Hong kyu Lee}
\affiliation{%
  \institution{Emory University}
  \streetaddress{201 Dowman Dr}
  \city{Atlanta}
  \state{GA}
  \country{USA}}
\email{hong.kyu.lee@emory.edu}

\author{Jing Ma}
\affiliation{%
  \institution{Emory University}
  \city{Atlanta}
  \state{GA}
  \country{USA}
}
\email{jingma365@gmail.com}

\author{Jian Lou}
\affiliation{%
  \institution{Zhejiang University}
  \city{Hangzhou}
  \country{China}}
\email{jian.lou@zju.edu.cn}

\author{Carl Yang}
\affiliation{%
  \institution{Emory University}
  \city{Atlanta}
  \state{GA}
  \country{USA}}
\email{j.carlyang@emory.edu}

\author{Li Xiong}
\affiliation{%
  \institution{Emory University}
  \city{Atlanta}
  \state{GA}
  \country{USA}}
\email{lxiong@emory.edu}

\renewcommand{\shortauthors}{Qiuchen Zhang et al.}


\begin{abstract}
  Graph Neural Networks (GNNs) have achieved great success in learning with graph-structured data. Privacy concerns have also been raised for the trained models which could expose the sensitive information of graphs including both node features and the structure information. In this paper, we aim to achieve node-level differential privacy (DP) for training GNNs so that a node and its edges are protected.  Node DP is inherently difficult for GNNs because all direct and multi-hop neighbors participate in the calculation of gradients for each node via layer-wise message passing and there is no bound on how many direct and multi-hop neighbors a node can have, so existing DP methods will result in high privacy cost or poor utility due to high node sensitivity. We propose a \textbf{D}ecoupled GNN with Differentially \textbf{P}rivate \textbf{A}pproximate Personalized Page\textbf{R}ank (DPAR) for training GNNs with an enhanced privacy-utility tradeoff. The key idea is to decouple the feature projection and message passing via a DP PageRank algorithm which learns the structure information and uses the top-$K$ neighbors determined by the PageRank for feature aggregation. By capturing the most important neighbors for each node and avoiding the layer-wise message passing, it bounds the node sensitivity and achieves improved privacy-utility tradeoff compared to layer-wise perturbation based methods. We theoretically analyze the node DP guarantee for the two processes combined together and empirically demonstrate better utilities of DPAR with the same level of node DP compared with state-of-the-art methods.
\end{abstract}

\begin{CCSXML}
<ccs2012>
<concept>
<concept_id>10002978.10003029.10011150</concept_id>
<concept_desc>Security and privacy~Privacy protections</concept_desc>
<concept_significance>500</concept_significance>
</concept>
<concept>
<concept_id>10010147.10010257.10010293.10010294</concept_id>
<concept_desc>Computing methodologies~Neural networks</concept_desc>
<concept_significance>500</concept_significance>
</concept>
</ccs2012>
\end{CCSXML}

\ccsdesc[500]{Security and privacy~Privacy protections}
\ccsdesc[500]{Computing methodologies~Neural networks}

\keywords{Differential Privacy; Graph Neural Networks; PageRank}



\maketitle

\section{Introduction}
Graph Neural Networks (GNNs) have shown superior performance in mining graph-structured data and learning graph representations for downstream tasks like node classification, link prediction, and graph classification~\cite{wu2020comprehensive,hamilton2017inductive,bojchevski2020scaling,liu2020towards}. Like neural network models trained on private datasets that could expose sensitive training data, GNN models trained on graph data embedded with node features and topology are also vulnerable to various privacy attacks \cite{wu2021adapting,zhang2021graphmi,zhang2022inference}. 

Differential privacy (DP) has become the standard for neural network training with rigorous protection for training data \cite{dwork2014algorithmic, abadi2016deep}. A key method is DP stochastic gradient descent (DP-SGD) \cite{abadi2016deep,zhang2020broadening}, which introduces calibrated noise into gradients during SGD training. DP ensures a bounded risk for an adversary to deduce from a model whether a record was used in its training. For graph data, where both node features (e.g., personal attributes) and edges (e.g., social relationships) can be sensitive, our objective is to achieve node-level DP, limiting the risk of inferring whether a node and its edges were included in the training.

\partitle{Challenges}
Achieving node DP for GNNs is inherently challenging. Unlike grid-based data such as images, graph data contains both feature vectors for each node and the edges that connect the nodes. During the training of GNN models,  all direct and multi-hop neighbors participate in the calculation of gradients for
each node via recursive layer-wise message passing \cite{hamilton2017inductive,wu2020comprehensive}. At each layer, each node aggregates the features (or the latent representations) from its neighbors when generating its own representation. 
There is no
bound on how many direct and multi-hop neighbors a node
can have. 
This means the sensitivity of the gradient due to the presence or absence of a node can be extremely high due to the node itself and its neighbors (or correlations between the nodes), which makes standard  DP-SGD based methods \cite{abadi2016deep,zhang2021private} infeasible, resulting in either high privacy cost or poor utility due to the large required DP noise.  

Few recent works tackled node  DP for training GNNs and they mainly attempted to bound the correlations during training to help bound the sensitivity or privacy cost. Daigavane et al. \cite{daigavane2021node} sample subgraphs to ensure that each node has a bounded number of neighbors within each subgraph, and limit the occurrences of each node in other subgraphs such that it can apply the privacy-by-amplification technique \cite{kasiviswanathan2011can,bassily2014differentially} to GNN. Their method is limited to GNNs with only one or two layers. The GAP algorithm \cite{sajadmanesh2023gap} assumes a maximum degree for each node in order to bound the sensitivity of individual nodes. Meanwhile, their message-passing scheme requires DP noise at each step, therefore, it further bounds the sensitivity by bounding the number of hops. This affects the model utility as it may restrict each node from acquiring useful information from higher hop neighbors. In sum, these approaches make it feasible to train GNNs with node DP but still sacrifice the model accuracy due to the restrictions on the number of hops during training.

\partitle{Contributions} 
We propose a Decoupled GNN with Differentially Private Approximate Personalized PageRank (DPAR, pronounced ``dapper'') for training GNNs with node DP and enhanced privacy-utility tradeoff. The key idea is to decouple the feature aggregation and message passing into two processes: 1) use a DP Approximate Personalized PageRank (APPR)
algorithm to learn the structure information, and 2) use the top-$K$ neighbors determined by the APPR for feature aggregation and model learning with DP. In other words, the APPR learns the influence score of all direct and multi-hop neighbors, and the layer-wise message-passing is replaced by neighborhood aggregation based on the APPR. 
 
Our framework is based on the decoupled GNN training frameworks \cite{klicpera2018predict,bojchevski2020scaling} which are originally designed to scale up the training for large graphs. Our main insight is that this decoupled strategy can be exploited to improve the design of DP algorithms. By capturing the most important neighbors for each node (bounding the node sensitivity) and avoiding the expensive privacy cost accumulation from the layer-wise message passing, our framework achieves enhanced privacy-utility tradeoff compared to layer-wise perturbation based methods.

Adding DP to this decoupled framework is nontrivial and presents several challenges. First, there are no existing works for computing sparsified APPR with formal node DP. While there exist DP top-$K$ selection algorithms \cite{durfee2019practical}, directly applying it can result in poor accuracy due to high sensitivity since each node (and its edges) can affect all the elements in the APPR matrix. Second, while DP-SGD can be used for feature aggregation, the neighborhood sampling returns a correlated batch of nodes based on the APPR, making the privacy analysis more complex, particularly for quantifying the privacy amplification ratio. 
To address these challenges, we develop DP-APPR algorithms to compute the top-$K$ sparsified APPR with DP. 
We then utilize DP-SGD \cite{abadi2016deep} for feature aggregation and model training to protect node features. We analyze the privacy loss caused by the neighborhood sampling and calibrate tighter Gaussian noise for the clipped gradients to provide a rigorous overall privacy guarantee. 
We summarize our contributions as follows.

\begin{itemize}[leftmargin=*]
    \item 

We propose DPAR, a novel de-coupled DP framework with sparsification for training GNNs with rigorous node DP. DPAR decouples message passing from feature aggregation via DP APPR and uses the top-$K$ neighbors determined by APPR for feature aggregation, which captures the most important neighbors for each node and avoids the layer-wise message passing and achieves better privacy-utility tradeoff than existing layer-wise perturbation based methods. 

\item We develop two DP APPR algorithms based on the exponential mechanism and Gaussian mechanism for selecting top-$K$ elements in the APPR vector with formal node DP. We employ sampling and clipping to address the high sensitivity challenge.  We utilize the exponential mechanism \cite{dwork2014algorithmic,durfee2019practical} to select the indices of the top-$K$ elements first, and then compute the corresponding noisy values with additional privacy costs. Alternatively, the Gaussian mechanism directly adds noise to the APPR vector and then selects the top-$K$ from the noisy vectors. We formally analyze the privacy guarantee for both methods. 

\item We use DP-SGD for feature aggregation and model learning based on the DP APPR.  By using the top-$K$ sparsified DP APPR vectors, we limit the maximum number of nodes one node can affect during gradient computation, which is the maximum column-wise $\ell_0$ norm of the DP APPR matrix. We incorporate additional clipping to ensure a maximum $\ell_1$ norm per column which determines the sensitivity of each node. We calibrate the Gaussian noise by theoretically analyzing the privacy loss and privacy amplification caused by the neighborhood sampling determined by the DP APPR and provide a rigorous privacy guarantee for DPAR.

\item We conduct extensive experiments on five real-world graph datasets to evaluate the effectiveness of the proposed algorithms.  Results show that they achieve better accuracy at the same level of node DP compared to the state-of-the-art algorithms. We also illustrate the privacy protection of the trained models.
\end{itemize}

\section{Background} \label{Background}

\subsection{GNNs with Personalized PageRank}
Given a graph $G=(\mathrm{V}, \mathrm{E}, \mathrm{X})$, where $\mathrm{V}$ and $\mathrm{E}$ denote the set of vertices and edges, respectively, and $\mathrm{X} \in \mathbb{R}^{|\mathrm{V}|\times{d}}$ represents the feature matrix where each row corresponds to the associated feature vector $X_{v} \in \mathbb{R}^d$ ($v=1, \dots , |\mathrm{V}|$) of  node $v$. Each node is associated with a class (or label) vector $Y_{v} \in \mathbb{R}^{c}$, such as the one-hot encoding vector, with the number of classes c. Considering the node classification task as an instance, a GNN model learns a representation function $f$ that generates the node embedding $\mathrm{h}_{v}$ for each node $v \in \mathrm{V}$ based on the features of the node itself as well as all its neighbors \cite{wu2020comprehensive}, and the generated node embeddings will further be used to label the class of unlabeled nodes using the softmax classifier with the cross-entropy loss.

GNN models use the recursive message-passing procedure to spread information through a graph, which couples the neighborhood aggregation and feature transformation for node representation learning. This coupling pattern can cause some potential issues in model training, including neighbor explosion and over-smoothing \cite{bojchevski2020scaling,liu2020towards}. Recent works propose to decouple the neighborhood aggregation process from feature transformation and achieve superior performance \cite{bojchevski2020scaling,dong2020equivalence}. Bojchevski et al. \cite{bojchevski2020scaling} show that neighborhood aggregation/propagation based on personalized PageRank \cite{gleich2015pagerank} can maintain the influence score of all ``neighboring'' (relevant) nodes that are reachable to the source node in the graph, without the explicit message-passing procedure. They pre-compute a pagerank matrix \boldsymbol{$\Pi$} and truncate it by keeping only the top $k$ largest entries of each row and setting others to zero to get a sparse matrix \boldsymbol{$\Pi^{ppr}$}, which is then used to aggregate node representations, generated using a neural network, of ``neighbors'' (most relevant nodes) to get final predictions, expressed as follows:

\begin{equation}
\begin{small}
\label{equ.pred}
\resizebox{.6\hsize}{!}
{$z_{v}=\operatorname{softmax}\left(\sum_{u \in \mathcal{N}^{k}(v)} \boldsymbol{\pi}'(v)_{u} H_{u,:}\right)$},
\end{small}
\end{equation}

where $\mathcal{N}^{k}(v)$ enumerates indices of the $k$ non-zero entries in $\boldsymbol{\pi}'(v)$ which is the $v$-th row of \boldsymbol{$\Pi^{ppr}$} corresponding to the node $v$'s sparse APPR vector. $\boldsymbol{H}_{u,:}$ is the node representation generated by a neural network $f_{\theta}$ using the node feature vector $X_{u}$ of each node $u$ independently.

\subsection{Differential Privacy (DP) 
} \label{subsec.dldp}

DP \cite{dwork2014algorithmic,ma2019privacy} has demonstrated itself as a strong and rigorous privacy framework for aggregate data analysis in many applications. DP ensures the output distributions of an algorithm are indistinguishable with a certain probability when the input datasets differ in only one record. 

\begin{definition}
(\textit{($\epsilon$, $\delta$)-Differential Privacy})~\cite{dwork2014algorithmic}. Let $\mathcal{D}$ and $\mathcal{D}'$ be two neighboring datasets that differ in at most one entry. A randomized algorithm $\mathcal{A}$ satisfies ($\epsilon$, $\delta$)-differential privacy if for all $\mathcal{S}\subseteq$ Range$(\mathcal{A})$:
$$
Pr\left[\mathcal{A}(\mathcal{D})\in \mathcal{S}\right] \leq e^{\epsilon} Pr\left[\mathcal{A}(\mathcal{D'})\in \mathcal{S}\right]+\delta,
$$
where $\mathcal{A}(\mathcal{D})$ represents the output of $\mathcal{A}$ with the input $\mathcal{D}$, $\epsilon$ and $\delta$ are the privacy parameters (or privacy budget) and a lower $\epsilon$ and $\delta$ indicate stronger privacy and lower privacy loss. 
\end{definition}

In this paper, we aim to achieve node-level DP for graph data to protect both the features and edges of a node. 

\begin{definition}
\label{nodedp}
(\textit{($\epsilon$, $\delta$)-Node-level Differential Privacy}) Let $\mathcal{G}$ and $\mathcal{G}'$ be two neighboring graphs that differ in at most one node including its feature vector and all its connected edges. A randomized algorithm $\mathcal{A}$ satisfies ($\epsilon$, $\delta$)-node-level DP if for all $\mathcal{S}\subseteq$ Range$(\mathcal{A})$:
$$
Pr\left[\mathcal{A}(\mathcal{G})\in \mathcal{S}\right] \leq e^{\epsilon} Pr\left[\mathcal{A}(\mathcal{G'})\in \mathcal{S}\right]+\delta,
$$
where $\mathcal{A}(\mathcal{G})$ represents the output of $\mathcal{A}$ with the input graph $\mathcal{G}$. 
\end{definition}

\subsection{DP-SGD and Challenges} \label{subsec.dpsgd}
A widely used technique for achieving DP for deep learning models is DP stochastic gradient descent (DP-SGD) algorithm~\cite{abadi2016deep,lee2018concentrated}. It first computes the gradient $\mathbf{g}\left(x_{i}\right)$ for each example $x_{i}$ in the randomly sampled batch with size  $B$, and then clips the $\ell_2$ norm of each gradient with a clipping threshold $C$ to bound the sensitivity of $\mathbf{g}\left(x_{i}\right)$ to $C$. The clipped gradient $\overline{\mathbf{g}}\left(x_{i}\right)$ of each example will be summed together and added with the Gaussian noise $\mathcal{N}\left(0, \sigma^{2} C^{2} \mathbf{I}\right)$ to protect privacy. Finally, the average of the noisy accumulated gradient $\tilde{\mathbf{g}}$ will be used to update the model parameters for this step. We express $\tilde{\mathbf{g}}$ as:

\begin{equation}
\begin{small}
\resizebox{.55\hsize}{!}{$
\label{equ.dpsgd}
\tilde{\mathbf{g}} \leftarrow \frac{1}{B}\left(\sum_{i=1}^{B} \overline{\mathbf{g}}\left(x_{i}\right)+\mathcal{N}\left(0, \sigma^{2} C^{2} \mathbf{I}\right)\right)
$}.
\end{small}
\end{equation}

In DP-SGD, each example individually calculates its gradient, e.g., only the features of $x_{i}$ will be used to compute the gradient $\mathbf{g}\left(x_{i}\right)$ for  $x_{i}$.
However, when training GNNs, nodes are no longer independent, and one node's feature will affect the gradients of other nodes. 
In a GNN model with $K$ layers, one node has the chance to utilize additional features from all its neighbors up to $K$-hop when calculating its gradient.  
Rethinking Equation \ref{equ.dpsgd}, the bound of the sensitivity of $\sum_{i=1}^{B} \overline{\mathbf{g}}\left(x_{i}\right)$ becomes $B * C$ since changing one node could potentially change the gradients of all nodes in the batch $\sum_{i=1}^{B} \overline{\mathbf{g}}\left(x_{i}\right)$. Substituting $B * C$ for $C$ in Equation \ref{equ.dpsgd} and we get the following equation:

\begin{equation}
\begin{small}
\resizebox{.6\hsize}{!}{$
\label{equ.dpsgd_gnn}
\tilde{\mathbf{g}'} \leftarrow \frac{1}{B}\left(\sum_{i=1}^{B} \overline{\mathbf{g}}\left(x_{i}\right)+\mathcal{N}\left(0, \sigma^{2} B^{2} C^{2} \mathbf{I}\right)\right)
$}.
\end{small}
\end{equation}

Comparing Equation \ref{equ.dpsgd_gnn} to \ref{equ.dpsgd}, to achieve the same level of privacy at each step during DP-SGD, the standard deviation of the Gaussian noise added to the gradients is scaled up by a factor of the batch size $B$, resulting in poor utility. Existing works \cite{sajadmanesh2023gap, daigavane2021node} mitigate the high sensitivity by bounding the number of hops and node degrees but also sacrifice the information that can be learned from higher hop neighbors, resulting in limited success in improving accuracy.

\section{DPAR} \label{Our Approach}
We present our DPAR framework for training DP GNN models via DP approximate personalized PageRank (APPR). 
The key idea is to exploit the decoupled framework (Section 2.1) and decouple message passing from feature aggregation into two steps: 1) use a DP APPR
algorithm to learn the structure information (Section 3.1), and 2) use the
top-$K$ neighbors determined by the APPR for feature aggregation and model learning with DP-SGD (Section 3.2). By capturing the most
important neighbors for each node from the APPR and avoiding explicit message passing, it bounds the node sensitivity without sacrificing model accuracy, achieving
an improved privacy-utility tradeoff. The overall privacy budget will be split between the two steps, and we theoretically analyze the node DP guarantee for the entire framework in Section 3.2.

\subsection{Differentially Private APPR}
We develop our DP APPR algorithms based on the ISTA algorithm \cite{fountoulakis2019variational} for computing APPR.   
Andersen et al. \cite{andersen2006local} proposed the first approximate personalized PageRank (APPR) algorithm which is adopted in \cite{klicpera2018predict,bojchevski2020scaling} to replace the explicit message-passing procedure for GNNs. Most recently, Fountoulakis et al.  \cite{fountoulakis2019variational} demonstrated that the APPR algorithm can be characterized as an $\ell_1$-regularized optimization problem, and proposed an iterative shrinkage-thresholding algorithm (ISTA) (Algorithm 3 in \cite{fountoulakis2019variational}) to solve it with a running time independent of the size of the graph. The input of ISTA contains the adjacency matrix of a graph and the one-hot vector corresponding to the index of one node in the graph, and the output is the APPR vector of that node. 
We develop our DP APPR  algorithm based on ISTA due to its status as one of the state-of-the-art APPR algorithms. ISTA provides an excellent balance between scalability and approximation guarantees. Moreover, the
resulting sparse APPR matrix can be easily accommodated into the memory, facilitating the subsequent neural network training.

Recall the purpose of calculating APPR vectors is to utilize them to aggregate representations from relevant nodes for the source node during model training. The index of each entry in an APPR vector indicates the index of a node in the graph, and the value of each entry reflects the importance or relevance of this node to the source node. By reserving the top $K$ largest entries for each APPR vector, the feature aggregation step computes a weighted average of the representations of the $K$ most relevant nodes to the source node (recall Equation \ref{equ.pred}). The graph structure information is encoded in both the indexes and values of non-zero entries in each sparse APPR vector. Thus, to provide DP protection for the graph structure, we propose two DP APPR algorithms to obtain the top-$K$ indexes and values for each APPR vector.

\begin{algorithm}[t]
\small
\SetAlgoLined
\caption{DP-APPR using the Exponential Mechanism (DP-APPR-EM)}
\label{alg.dp_appr_em}
\KwIn{ISTA hyperparameters: $\gamma, \alpha, \rho$; privacy parameters: $\epsilon$, $\epsilon_2$, $\delta$; clip bound $C_2$, a graph $(V, E)$ where $V = \{v_1, ..., v_N\}$, an integer $K>0$ and an integer $M \in [1,N]$.} 
\textbf{Initialize} the APPR matrix $\boldsymbol{\Pi} \in \mathbb{R}^{M \times N}$ with all zeros. \\
\For{$i=1, ..., M$} {
\textbf{Compute APPR:} \\
Compute the APPR vector $\mathbf{p}_{(v_i)}$ for node $v_i$ using ISTA; \\
\textbf{Clip Norm:} \\
$\hat{\mathbf{p}}_{(v_i)} \leftarrow : \text{for each entry } \mathbf{p}_{(v_i)}[j], j \in [1,...,N],  \text{set } \mathbf{p}_{(v_i)}[j] = \mathbf{p}_{(v_i)}[j] /  \max \left(1, \frac{\left\|\mathbf{p}_{(v_i)}[j]\right\|_{1}}{C_{2}}\right)$

\textbf{Add Noise:} \\
$\tilde{\mathbf{p}}_{(v_i)} \leftarrow \hat{\mathbf{p}}_{(v_i)}+\textit{Gumbel}\left(\beta\mathbf{I}\right)$, where $\beta=C_2/\epsilon$; \\

\textbf{Report Noisy Indexes:} \\
$\mathbf{N}_K \leftarrow$: select the indexes of the top $K$ entries with the largest values in $\tilde{\mathbf{p}}_{(v_i)}$; \\

\textbf{Report Noisy Values:} \\
option I: $\tilde{\mathbf{p}}_{(v_i)}' \leftarrow$: set $\hat{\mathbf{p}}_{(v_i)}[j]$, $j \in \mathbf{N}_K$, to be $1/K$, and other entries to be 0;  \\

option II: $\tilde{\mathbf{p}}_{(v_i)}' \leftarrow$: set $\hat{\mathbf{p}}_{(v_i)}[j]$, $j \in \mathbf{N}_K$, to be $\hat{\mathbf{p}}_{(v_i)}[j] + \textit{Laplace}(KC_2/\epsilon_2)$, and other entries to be 0;  \\

\textbf{Replace} the $i$-th row of $\boldsymbol{\Pi}$ with $\tilde{\mathbf{p}}_{(v_i)}'$. \\
}
\Return $\boldsymbol{\Pi}$ and the overall privacy cost.
\end{algorithm}

\partitle{Exponential Mechanism (DP-APPR-EM)}
We present the DP APPR algorithm using the exponential mechanism. 
While we can employ a DP top-$K$ selection algorithm based on the exponential mechanism \cite{durfee2019practical}, there are several challenges that need to be addressed.  First, each node (and its edges) can change an arbitrary number of elements in the APPR vector and lead to significant changes in each element.  Second,  each node can change an arbitrary number of APPR vectors in the APPR matrix.  Both of these mean extremely high sensitivity, making a direct application of the top-$K$ selection algorithm ineffective. To address them, we employ two techniques: 1) clipping each element to bound the sensitivity, 2) sampling and only computing APPR for a subset of M nodes in the graph to reduce sensitivity.  We then employ the exponential mechanism to select the top-$K$ values.

As shown in Algorithm \ref{alg.dp_appr_em}, for each of the $M$ sampled nodes, we first compute the APPR vector using the ISTA algorithm (line 4).  Then we employ clipping to bound the sensitivity of each element by $C_2$ (line 6). We use the clipped value as its utility score for the exponential mechanism since the magnitude of each entry indicates its importance (utility) and is used as the weight when aggregating the representation of the nodes. We simulate the exponential mechanism by injecting a one-shot Gumbel noise to the clipped vector $\hat{\mathbf{p}}_{(v)}$ (line 8) and then select the indexes of top $K$ largest noisy entries \cite{durfee2019practical} (line 10). We can then either: option I) set the values of all top $K$ entries to be $1/K$ (line 12), which means we consider the top $K$ entries equally important to the source node, or option II) spend additional privacy budget $\epsilon_2$ to obtain the noisy values of the top $K$ entries with DP (line 13). Given the same privacy budget, the option I has a better chance to output indexes of the actual top $K$ entries while losing the importance scores. In contrast, option II sacrifices some accuracy in selecting the indexes of top $K$ entries but has additional importance scores.

\partitle{Privacy Analysis of DP-APPR-EM}
We formally analyze the DP guarantee of Algorithm 1 utilizing the following corollary for the exponential mechanism based top-$K$ selection. 

\begin{corollary} \cite{durfee2019practical}
\label{coro.em_gumbel}
$\mathcal{M}_{\text {Gumbel}}^{k}(u)$ adds the one-shot $\textit{Gumbel}(\Delta(u) / \epsilon)$ noise to each utility score $u(x, r)$ and outputs the k indices with the largest noisy values. For any $\delta \geq 0$, $\mathcal{M}_{\text {Gumbel}}^{k}(u)$ is $\left(\varepsilon^{\prime}, \delta\right)$-DP where

$\epsilon^{\prime}=2 \cdot \min \left\{ k \epsilon, k \epsilon \left(\frac{e^{2 \epsilon}-1}{e^{2 \epsilon}+1}\right)+ \epsilon \sqrt{2 k \ln (1 / \delta)}\right\}$
\end{corollary}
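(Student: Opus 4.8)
The plan is to prove the corollary by reducing the one-shot Gumbel mechanism to a $k$-fold adaptive composition of exponential mechanisms and then applying two composition bounds whose minimum yields the stated $\epsilon'$. First I would establish the \emph{Gumbel-max / peeling equivalence}: adding independent $\textit{Gumbel}(\Delta(u)/\epsilon)$ noise to every utility score and reporting the $k$ indices with the largest noisy values is equal in distribution to the iterative ``peeling'' procedure that repeatedly draws the arg-max via the exponential mechanism, removes it from the candidate set, and repeats $k$ times. This follows from the standard fact that for Gumbel scale $\beta$ the rule $\arg\max_r\{u(x,r)+\textit{Gumbel}(\beta)\}$ selects $r$ with probability proportional to $\exp(u(x,r)/\beta)$, together with the max-stability of the Gumbel distribution, which guarantees that conditioning on and removing the current arg-max leaves the residual noisy scores distributed as a fresh Gumbel-perturbed instance. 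Hence $\mathcal{M}_{\text{Gumbel}}^{k}$ is exactly $k$ adaptive rounds of the exponential mechanism.

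Next I would pin down the per-round guarantee. With scale $\beta=\Delta(u)/\epsilon$ each round samples $r$ with probability $\propto \exp(\epsilon\,u(x,r)/\Delta(u))$, i.e. the exponential mechanism run at parameter $2\epsilon$ under the McSherry--Talwar normalization, so each round is $2\epsilon$-DP. The extra structure I must extract is the \emph{bounded-range} property: writing the per-round privacy loss as $L(r)=\frac{\epsilon}{\Delta(u)}\bigl(u(D,r)-u(D',r)\bigr)+\ln(Z_{D'}/Z_D)$, the first summand lies in an interval of width $2\epsilon$ and the second is a constant in $r$, so $L$ is confined to an interval of length $\epsilon_0=2\epsilon$ (strictly better than the length-$4\epsilon$ range that pure $2\epsilon$-DP alone permits). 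Setting $\epsilon_0=2\epsilon$, I would then bound the $k$ rounds two ways and take the minimum. Basic composition of $k$ rounds of $2\epsilon$-DP gives $k\epsilon_0=2k\epsilon$, which is the first branch $2\cdot k\epsilon$ of the $\min$.

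For the second branch I would carry out an advanced-composition argument that exploits the bounded range to sharpen the mean term. The steps are: (i) bound the expected per-round privacy loss $\mathbb{E}_{y\sim M(D)}[L(y)]$ by $\epsilon_0\frac{e^{\epsilon_0}-1}{e^{\epsilon_0}+1}=\epsilon_0\tanh(\epsilon_0/2)$, the optimal mean bound for a log-likelihood-ratio variable confined to an interval of length $\epsilon_0$ subject to the identity $\mathbb{E}_{y\sim M(D')}[e^{L(y)}]=1$; (ii) treat the total loss $\sum_{i=1}^{k}L_i$ as an adapted bounded-difference process and apply an Azuma--Hoeffding tail bound, yielding a deviation of $\epsilon_0\sqrt{2k\ln(1/\delta)}$ at failure probability $\delta$ (the same deviation term that appears in standard advanced composition); (iii) sum mean and deviation to get $k\epsilon_0\tanh(\epsilon_0/2)+\epsilon_0\sqrt{2k\ln(1/\delta)}$; and (iv) substitute $\epsilon_0=2\epsilon$ to obtain $2\bigl(k\epsilon\frac{e^{2\epsilon}-1}{e^{2\epsilon}+1}+\epsilon\sqrt{2k\ln(1/\delta)}\bigr)$, precisely the second branch. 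Since both guarantees hold simultaneously, taking the smaller yields the claimed $\epsilon'$.

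I expect the main obstacle to be items (i)--(ii) of the last step: proving the sharp $\tanh$ mean bound for bounded-range privacy losses, and verifying that $L_1,\dots,L_k$ genuinely forms an adapted bounded-difference martingale so that the concentration inequality applies despite the adaptivity of the peeling, rather than merely being a sum of independent bounded variables. By contrast, the Gumbel-max equivalence and the single-round $2\epsilon$-DP and bounded-range bookkeeping are comparatively routine.
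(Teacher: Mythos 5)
The paper never proves this corollary --- it is imported verbatim from Durfee and Rogers \cite{durfee2019practical} as a black-box ingredient for Theorem~\ref{theo.privacy_em}, so there is no in-paper proof to compare against; the right benchmark is the cited source, and your reconstruction follows essentially that original argument: the Gumbel-max/peeling equivalence, the observation that each peeled round with scale $\Delta(u)/\epsilon$ is the exponential mechanism at parameter $\epsilon_0=2\epsilon$ with privacy loss confined to an interval of width $2\epsilon$, and then the minimum of basic composition ($k\epsilon_0=2k\epsilon$) and a Kairouz--Oh--Viswanath-style advanced composition $k\epsilon_0\tanh(\epsilon_0/2)+\epsilon_0\sqrt{2k\ln(1/\delta)}$, which under $\epsilon_0=2\epsilon$ and $\tanh(\epsilon)=(e^{2\epsilon}-1)/(e^{2\epsilon}+1)$ is exactly the stated $\epsilon'$. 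One technical mis-statement in your step (i): $\epsilon_0\tanh(\epsilon_0/2)$ is \emph{not} the optimal mean bound for a log-likelihood ratio confined to an interval of length $\epsilon_0$; optimizing a two-point loss over the interval's location subject to $\mathbb{E}_{D'}[e^{L}]=1$ gives the strictly smaller $\ln\bigl((e^{\epsilon_0}-1)/\epsilon_0\bigr)-1+\epsilon_0/(e^{\epsilon_0}-1)$, which is precisely the sharpening Durfee--Rogers exploit for a further-improved branch that this corollary does not quote. This does not break your proof, because what your derivation actually needs is weaker: bounded range of width $\epsilon_0$ together with $\mathbb{E}_{D'}[e^{L}]=1$ forces the interval to straddle $0$, hence each round is $\epsilon_0$-DP, and then the KOV mean bound $\epsilon_0\tanh(\epsilon_0/2)$ and the Azuma deviation $\epsilon_0\sqrt{2k\ln(1/\delta)}$ for adaptively composed $\epsilon_0$-DP mechanisms apply verbatim --- so for the two-branch bound stated here the bounded-range bookkeeping is actually dispensable, serving only to certify per-round $2\epsilon$-DP. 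Your flagged concerns are the right ones and are handled exactly as in the standard advanced-composition proof: the shrinking candidate set leaves the per-round sensitivity at $\Delta(u)$ (the outcome set is data-independent), and the privacy-loss increments form an adapted sequence whose conditional means obey the per-round bound, so the martingale concentration goes through despite adaptivity.
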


The privacy analysis conducted in \cite{durfee2019practical} assumes independent users and the sensitivity $\Delta(u)$ is 1. In our case, each node (and its edges) can modify an arbitrary number of elements in the APPR vector and each element can change at most by $C_2$ due to clipping (line 6). Consequently, the sensitivity $\Delta(u)$ used in Corollary \ref{coro.em_gumbel} is set to $C_2$  and the noise is calibrated accordingly in our algorithm (line 8). Additionally, since each node can change  up to $M$ vectors in the APPR matrix, we use sequential composition to bound the privacy loss for $M$ APPR vectors. With the calibrated noise and composition, we establish the DP guarantee in Theorem \ref{theo.privacy_em}.

\begin{theorem} \label{theo.privacy_em}
For any $\epsilon > 0$, $\epsilon_2 > 0$ and $\delta \in (0,1]$, let $\epsilon_{1}=2 \cdot \min \left\{K \epsilon, K \epsilon\left(\frac{e^{2 \epsilon}-1}{e^{2 \epsilon}+1}\right)+\epsilon \sqrt{2 K \ln (1 / \delta)}\right\}$, Algorithm \ref{alg.dp_appr_em} is $(\epsilon_{g_1}, 2 M \delta)$-differentially private for option I, and $(\epsilon_{g_2}, 2 M \delta)$-differentially private for option II, where $\epsilon_1 = \epsilon_{g_1} /\left(2 \sqrt{M \ln \left(e+\epsilon_{g_1} / 2 M \delta\right)}\right)$ and $\epsilon_1 + \epsilon_2=\epsilon_{g_2} /\left(2 \sqrt{M \ln \left(e+\epsilon_{g_2} / 2 M \delta\right)}\right)$.
\end{theorem}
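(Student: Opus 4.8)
The plan is to establish the guarantee in three stages: first bound the privacy loss of processing a single source node, then account for the fact that one node can influence many APPR vectors, and finally combine the two via composition. For a single sampled node $v_i$, the only privacy-sensitive releases are the top-$K$ indices (line 10) and, for option II, the corresponding noisy values (line 13). The per-element clipping in line 6 forces every coordinate of $\hat{\mathbf{p}}_{(v_i)}$ to lie in $[0, C_2]$, so when we move to a neighboring graph each utility score fed to the exponential mechanism changes by at most $C_2$. I would therefore invoke Corollary~\ref{coro.em_gumbel} with $k = K$ and $\Delta(u) = C_2$; since the injected Gumbel noise has scale $\beta = C_2/\epsilon = \Delta(u)/\epsilon$, the index-selection step is $(\epsilon_1, \delta)$-DP with $\epsilon_1$ exactly the quantity defined in the statement. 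For option~I this is the entire release for node $v_i$, so the per-node guarantee is $(\epsilon_1, \delta)$-DP.

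For option~II I would additionally analyze the Laplace value release as an adaptively composed second mechanism. Conditioned on any selected index set $\mathbf{N}_K$ returned by the first step, the released vector consists of $K$ clipped entries, each bounded by $C_2$, so changing one node changes this $K$-dimensional vector by at most $K C_2$ in $\ell_1$ norm. The Laplace noise of scale $K C_2/\epsilon_2$ then yields $(\epsilon_2, 0)$-DP for the values, and basic sequential composition with the index-selection step gives a per-node guarantee of $(\epsilon_1 + \epsilon_2, \delta)$-DP.

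The crucial graph-specific step is the composition across source nodes. Because adding or removing one node together with its incident edges can perturb the APPR vector of every source node that reaches it, a single node can affect all $M$ rows written into $\boldsymbol{\Pi}$. I would therefore treat Algorithm~\ref{alg.dp_appr_em} as an $M$-fold adaptive composition of the per-node mechanism and apply the advanced composition theorem: composing $M$ mechanisms that are each $(\epsilon_1, \delta)$-DP (option~I) or $(\epsilon_1+\epsilon_2, \delta)$-DP (option~II) yields $(\epsilon_g, M\delta + \delta')$-DP. Choosing the slack parameter $\delta' = M\delta$ produces the overall failure probability $2M\delta$ claimed in the theorem, and the corresponding bound on $\epsilon_g$ is exactly the implicit relation $\epsilon_1 = \epsilon_{g_1}/(2\sqrt{M\ln(e+\epsilon_{g_1}/2M\delta)})$ for option~I, and the analogous relation with $\epsilon_1+\epsilon_2$ in place of $\epsilon_1$ for option~II.

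I expect the main obstacle to be this last stage rather than any individual noise calibration. The observation that one node can change an unbounded number of entries spread across up to $M$ distinct APPR vectors is precisely what breaks the ``one user, one dataset'' assumption behind Corollary~\ref{coro.em_gumbel}, and it is the combination of per-element clipping (bounding the within-vector sensitivity to $C_2$) with the $M$-fold composition (bounding the across-vector effect) that rescues a usable bound. Care is also needed to track the $\delta$ appearing in Corollary~\ref{coro.em_gumbel} together with the composition slack $\delta'$ so that they sum to exactly $2M\delta$, and to confirm that the value-release step of option~II composes adaptively with the index step before the outer $M$-fold composition is applied.
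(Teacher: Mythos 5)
Your proposal is correct and follows essentially the same route as the paper's proof: per-element clipping to set $\Delta(u)=C_2$ in Corollary~\ref{coro.em_gumbel} for the Gumbel index release, Laplace noise calibrated to the $\ell_1$ sensitivity $KC_2$ composing sequentially for option~II's value release, and an $M$-fold composition across APPR vectors justified by the observation that one node can affect all $M$ rows of $\boldsymbol{\Pi}$. The one point to tighten is that the implicit relation $\epsilon_1=\epsilon_{g_1}/\bigl(2\sqrt{M\ln(e+\epsilon_{g_1}/2M\delta)}\bigr)$ with total failure probability $2M\delta$ comes specifically from the optimal composition theorem of Kairouz et al.~\cite{kairouz2017composition} (with $\delta=\delta_g/2k$), which is what the paper invokes, whereas the classical advanced composition bound you name would give the different formula $\epsilon_g=\epsilon_1\sqrt{2M\ln(1/\delta')}+M\epsilon_1(e^{\epsilon_1}-1)$, so you should cite the optimal composition theorem directly rather than asserting the relation follows from generic advanced composition.
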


\begin{proof}
See Appendix \ref{append.prof_privacy_em} for the proof.
\end{proof}

\partitle{Gaussian Mechanism}
We explore another DP-APPR algorithm (DP-APPR-GM) based on  Gaussian mechanism \cite{dwork2014algorithmic} and output perturbation. The idea behind DP-APPR-GM is to use the clipping strategy
to bound the global sensitivity of each output PageRank vector and add Gaussian noise to
each bounded PageRank vector to achieve DP. See Appendix \ref{append.dp-appr-gm} for more details about DP-APPR-GM.

\begin{algorithm}[t]
\small
\SetAlgoLined
\caption{Differentially Private GNNs}
\label{alg.dp_gnn}
\KwIn{The graph dataset $\overline{G}$, sampling rate $q'$, randomly sampled training graph $G = (V, E, X)$ from $\overline{G}$ by $q'$ where $V = \{v_1, ..., v_N\}$, a sampled subset $V_M \subseteq V$ with size $M$ (for computing APPR), 
learning rate $\eta_{t}$, batch size $B$, training steps $T$, noise scale $\sigma$, gradient norm bound $C$, clip bound $\tau$, the DP APPR matrix $\boldsymbol{\Pi} \in \mathbb{R}^{M \times N} $ of $V_M$ satisfying $(\epsilon_{pr}, \delta_{pr})$-DP.} 

\textbf{Initialize} $\theta_0$ randomly \\
\For{$j = 1, ..., N$}{
$\boldsymbol{\Pi}_{:,j} \leftarrow \boldsymbol{\Pi}_{:,j} / \max \left(1, \frac{\left\|\boldsymbol{\Pi}_{:,j}\right\|_1}{\tau}\right)$ 
}
\For{$t = 1, ..., T$}{
Take a randomly sampled batch $B$ and their $K$ neighbors based on $\boldsymbol{\Pi}$ from $V_M$. \\
\textbf{Compute Gradient:} \\
For each $i \in B_t$, compute $\mathbf{g}_{t}\left(v_{i}\right) \leftarrow \nabla_{\theta_{t}} \mathcal{L}\left(\theta_{t}, v_{i}\right)$. \\

\textbf{Clip Gradient:} \\
$\overline{\mathbf{g}}_{t}\left(v_{i}\right) \leftarrow \mathbf{g}_{t}\left(v_{i}\right) / \max \left(1, \frac{\left\|\mathbf{g}_{t}\left(v_{i}\right)\right\|_{2}}{C}\right)$. \\

\textbf{Add Noise:} \\
$\tilde{\mathbf{g}}_{t} \leftarrow \frac{1}{B}\left(\sum_{i} \overline{\mathbf{g}}_{t}\left(v_{i}\right)+\mathcal{N}\left(0, \sigma^{2} C^{2} \mathbf{I}\right)\right).$

\textbf{Update Parameters:} \\
$\theta_{t+1} \leftarrow \theta_{t}-\eta_{t} \tilde{\mathbf{g}}_{t}$. \\
}
\Return $\theta_T$ and the overall privacy cost.
\end{algorithm}

\subsection{Differentially Private GNNs}

We show our overall approach for training a DP GNN model in Algorithm \ref{alg.dp_gnn}. The main idea is to use DP APPR for neighborhood sampling and then use DP-SGD to achieve DP for the node features.  We employ additional sampling and clipping to reduce the privacy cost. 

Given a graph dataset $\overline{G}$, we first use a sampling rate $q'$ to randomly sample nodes from $\overline{G}$ to form a subgraph $G$ = ($V$, $E$, $X$) containing only the sampled nodes and their connected edges, which is used for training in Algorithm 3. This sampling step brings a privacy amplification effect in our privacy guarantee by a factor of $q'$ \cite{kasiviswanathan2011can, 
 beimel2014bounds}.  Note that this is different from the batch sampling during each iteration of the training process.  We further sample $M$ nodes to compute the DP APPR using DP-APPR-EM or DP-APPR-GM and use it as input for Algorithm \ref{alg.dp_gnn}.

Utilizing the sparsified DP APPR vectors (each row has only top-$K$ non-zero elements) limits the impact of a node on the gradient computation of up to $B^\prime$ nodes, where $B^\prime$ is the maximum column-wise $\ell_0$ norm of the DP APPR matrix (number of non-zero elements in each column). 
The exact impact or sensitivity is determined by the maximum column-wise $\ell_1$ norm of the DP APPR matrix (see privacy analysis for more details).  Hence, we employ additional clipping on the DP APPR matrix to bound the sensitivity.  Given $\boldsymbol{\Pi}$ computed using DP-APPR algorithms, each column of $\boldsymbol{\Pi}$ is clipped to have a maximum $\ell_1$ norm of $\tau$ to limit privacy loss (line 3).

During each training step, we sample a batch of $B$ nodes and their top-$K$ neighbors (both direct and indirect) using APPR vectors, loading features of up to $B \times K$ nodes for gradient computation (line 6). The loss function $\mathcal{L}(\theta, v_i)$ is the cross-entropy between node $v_i$'s true label and its prediction from Equation \ref{equ.pred}. Following DP-SGD, we compute each node's gradient, clip it to a maximum $\ell_2$ norm of $C$, and introduce Gaussian noise with sensitivity $C$ (line 7-12). The model is updated with the averaged noisy gradient (line 14).

\partitle{Privacy Analysis}
Theorem \ref{theo.privacy_gnn} presents the DP analysis of Algorithm \ref{alg.dp_gnn}. An essential distinction between our algorithm and the original DP-SGD is that our neighborhood sampling returns a correlated batch of nodes for gradient computation (i.e., the computation of $\mathbf{g}_{t}(v_i)$ requires the features of the neighboring nodes of node $v_i$, and node $v_i$ accesses the fixed $K$ nodes based on the DP-APPR vector), while the original DP-SGD uses the much simpler Poisson sampling. As a result, the privacy analysis of our algorithm is more involved, especially in terms of quantifying the privacy amplification ratio under such a neighbor-correlated sampling setting. We prove that the privacy amplification ratio is proportional to the maximum of the column-wise $\ell _1$ norm of the DP-APPR matrix.

For the composition of DP-APPR and DP-SGD, we use the standard composition theorem.  Recall that for the privacy composition of multiple DP-APPR vectors for the DP-APPR matrix (Theorem \ref{theo.privacy_em} and \ref{theo.privacy_gnn}), we used a strong composition theorem.  We note that our privacy analysis can always benefit from a more advanced composition theorem to achieve tighter overall privacy, which can be a future work direction.

\begin{theorem} \label{theo.privacy_gnn}
There exist constants $c_{1}$ and $c_{2}$ so that given probability $q=B/N$ and the number of steps $T$, for any $\epsilon_{sgd}<c_{1} q^{2} T,$ Algorithm \ref{alg.dp_gnn} is $q'(\epsilon_{sgd}+\epsilon_{pr}, \delta_{sgd}+\delta_{pr})$ -differentially private corresponding to $\overline{G}$, for any $\delta_{sgd}>0$ if we choose
$
\sigma \geq c_{2} \frac{q \tau \sqrt{T \log (1 / \delta_{sgd})}}{\epsilon_{sgd}}
$.
\end{theorem}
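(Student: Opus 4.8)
The plan is to treat Algorithm~\ref{alg.dp_gnn} as the adaptive composition of two mechanisms operating on the sampled training graph $G$: the DP-APPR mechanism that releases $\boldsymbol{\Pi}$ (already shown to be $(\epsilon_{pr},\delta_{pr})$-DP, e.g. by Theorem~\ref{theo.privacy_em}), and the DP-SGD loop that takes $\boldsymbol{\Pi}$ as a fixed input. I would first establish privacy for the DP-SGD loop conditioned on any fixed column-clipped $\boldsymbol{\Pi}$, obtaining an $(\epsilon_{sgd},\delta_{sgd})$ guarantee with respect to node-level neighbors of $G$, and then combine the two guarantees by sequential composition to get $(\epsilon_{sgd}+\epsilon_{pr},\delta_{sgd}+\delta_{pr})$-DP for the whole computation on $G$. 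The final step is to convert this guarantee on $G$ into one on $\overline{G}$ by invoking the sampling-amplification lemma~\cite{kasiviswanathan2011can,beimel2014bounds} for the outer subsampling at rate $q'$, which contracts the privacy parameters by the factor $q'$ and yields the stated bound.

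The crux is the per-step sensitivity analysis of the noisy gradient (line~12) under node-level neighboring graphs, which differs from ordinary DP-SGD because the neighborhood sampling couples the gradients. First I would fix $\boldsymbol{\Pi}$ (after the column-wise $\ell_1$ clipping in line~3, so every column has $\ell_1$ norm at most $\tau$) and ask how much $\sum_i \overline{\mathbf{g}}_t(v_i)$ can change when a single node $w$ is inserted or removed. Node $w$ enters only through the aggregation term $\sum_{u\in\mathcal{N}^{k}(v)}\boldsymbol{\pi}'(v)_u H_{u,:}$ of each source node $v$ that lists $w$ among its top-$K$ neighbors, and its contribution to that source node's clipped gradient is scaled by the weight $\boldsymbol{\Pi}_{v,w}$. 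Summing these contributions over all affected source nodes and using that the $w$-th column of $\boldsymbol{\Pi}$ has $\ell_1$ norm at most $\tau$, the $\ell_2$ sensitivity of the batch sum is bounded by $\tau C$ rather than by the naive $B' C$ coming from the column-wise $\ell_0$ norm. This is exactly the statement that the privacy-amplification ratio is governed by the maximum column-wise $\ell_1$ norm $\tau$.

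With the sensitivity pinned at $\tau C$ and Gaussian noise of scale $\sigma C$ added to the sum, each training step is a subsampled Gaussian mechanism with sampling rate $q=B/N$ and effective noise multiplier $\sigma/\tau$. I would then apply the moments-accountant composition of Abadi et al.~\cite{abadi2016deep} over the $T$ steps: their bound guarantees $(\epsilon_{sgd},\delta_{sgd})$-DP for any $\delta_{sgd}>0$ and any $\epsilon_{sgd}<c_1 q^2 T$ as long as the noise multiplier satisfies $\sigma/\tau \geq c_2\, q\sqrt{T\log(1/\delta_{sgd})}/\epsilon_{sgd}$, which is precisely the condition $\sigma \geq c_2\, q\tau\sqrt{T\log(1/\delta_{sgd})}/\epsilon_{sgd}$ in the statement. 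Combining this with the $(\epsilon_{pr},\delta_{pr})$ guarantee for $\boldsymbol{\Pi}$ and the outer $q'$-amplification then completes the argument.

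The hard part will be the sensitivity step of the second paragraph: justifying rigorously that the per-source-node gradient perturbation induced by a changed neighbor is controlled by the aggregation weight $\boldsymbol{\Pi}_{v,w}$ (so that the summed effect telescopes into the column $\ell_1$ norm), and that the correlated neighborhood sampling can still be cast as a subsampled Gaussian mechanism to which the moments accountant applies. Care is also needed because a changed node acts both as a source node and as a neighbor of other source nodes; I would handle the source-node contribution by the usual clip-to-$C$ bound and absorb the neighbor contributions into the $\tau C$ term, verifying that the two together do not exceed the sensitivity used to calibrate $\sigma$.
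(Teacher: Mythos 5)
Your proposal takes essentially the same route as the paper's proof in Appendix~\ref{append.prof_dpgnn}: you bound the effect of one changed node on the batch gradient sum by the column-wise $\ell_1$ norm of the clipped DP-APPR matrix (sensitivity $\tau$ in units of the clip bound $C$), run the moments accountant with sampling rate $q=B/N$ over $T$ steps to get the stated condition on $\sigma$, and then combine with the $(\epsilon_{pr},\delta_{pr})$ cost of DP-APPR via standard composition plus the $q'$ subsampling amplification, exactly as the paper does. The only difference is one of packaging: you invoke the result of \cite{abadi2016deep} as a black box with effective noise multiplier $\sigma/\tau$, whereas the paper re-derives the moment bound explicitly (modeling the neighboring outputs as $\mu_0$ versus the mixture shifted by $q\tau$ and bounding the $T_3$ term) precisely to justify that the correlated neighborhood sampling admits the subsampled-Gaussian treatment---the step you yourself flag as the hard part.
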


\begin{proof}
    See Appendix \ref{append.prof_dpgnn} for the proof.
\end{proof}

\section{Experimental Results} \label{Experimental Results}
We evaluate our method on five graph datasets with varying sizes and edge density: Cora-ML \cite{bojchevski2017deep}, Microsoft Academic graph \cite{shchur2018pitfalls}, CS \cite{shchur2018pitfalls}, Physics \cite{shchur2018pitfalls}, and Reddit \cite{hamilton2017inductive}. Appendix \ref{append.dataset} provides the details of each dataset.

\begin{table*}[h!]
\setlength{\tabcolsep}{2pt}
\caption{Privacy budget and test accuracy on each graph dataset}
\label{tab:main_results}
\begin{tabular}{c|c|c|c|c|c|c|c|c|c|c}
\hline
\bf Dataset & \bf Privacy ($\epsilon$, $\delta$) & \bf GAP & \bf SAGE & \bf Features & \bf DPAR-EM0 & \bf DPAR-EM1 & \bf DPAR-GM & \bf DPARNoDP & \bf GAPNoDP & \bf FeaturesNoDP \\
\midrule\midrule[.1em]
 \multirow{2}{*}{Cora-ML} & (1, $2 \times 10^{-3}$)  & 0.34 & 0.152 & \bf \underline{0.5733} & 0.3421 & 0.2895 & 0.3333 & \multirow{2}{*}{0.7076} & \multirow{2}{*}{0.8883} & \multirow{2}{*}{0.7733} \\ 
 & (8, $2 \times 10^{-3}$) & 0.5733 & 0.368 & 0.6107 & 0.5965 & \bf \underline{0.6199} & 0.4854 & & \\
\hline
\multirow{2}{*}{MS Academic} & (1, $8 \times 10^{-4}$)  & 0.6563 & 0.013 & 0.83 & 0.8306 & \bf \underline{0.8569} & 0.8225 & \multirow{2}{*}{0.955} & \multirow{2}{*}{0.9571} & \multirow{2}{*}{0.8382} \\
& (8, $8 \times 10^{-4}$) & 0.8581 & 0.063 & 0.8723 & 0.9054 & 0.9135 & \bf \underline{0.9165} & & \\
\hline
\multirow{2}{*}{CS} & (1, $8 \times 10^{-4}$) & 0.66 & 0.0917 & 0.8344 & 0.8898 & 0.8921 & \bf \underline{0.8927} & \multirow{2}{*}{0.9707} & \multirow{2}{*}{0.9571} & \multirow{2}{*}{0.9307}  \\
& (8, $8 \times 10^{-4}$) & 0.8537 & 0.7366 & 0.895 & 0.9017 & 0.8994 & \bf \underline{0.9063} & & \\
\hline
\multirow{2}{*}{Reddit} & (1, $1 \times 10^{-4}$) & 0.7047 & 0.086 & 0.7436 & 0.9167 & 0.9286 & \bf \underline{0.934} & \multirow{2}{*}{0.9698} & \multirow{2}{*}{0.9949} & \multirow{2}{*}{0.8337} \\
& (8, $1 \times 10^{-4}$) & 0.9161 & 0.82 & 0.777 & 0.9375 & \bf \underline{0.9399} & 0.931 & & \\
\hline
\multirow{2}{*}{Physics} & (1, $1 \times 10^{-4}$) & 0.8192 & 0.1263 & 0.8412 & 0.8887 & 0.8927 & \bf \underline{0.8948} & \multirow{2}{*}{0.9548} & \multirow{2}{*}{0.9597} & \multirow{2}{*}{0.9504} \\
& (8, $1 \times 10^{-4}$) & 0.9088 & 0.8919 & 0.9017 & 0.9023 & 0.9020 & \bf \underline{0.9101} & & \\
\hline

\end{tabular}
\end{table*}

\partitle{Setup}
To simulate the real-world situations where training nodes are assumed to be private and not publicly available, we split the nodes into a training set ($80\%$) and a test set ($20\%$), and select inductive graph learning setting by removing edges between the two sets. The training nodes are inaccessible during inference. We use the same 2-layer feed-forward neural network with a hidden layer size of 32 as in \cite{bojchevski2020scaling} for all datasets. The training epochs are fixed at 200, the learning rate at 0.005, and the batch size at 60. The hyperparameters for ISTA are chosen through grid search as $\alpha=0.25$, $\rho=10^{-4}$, and $\gamma=10^{-4}$. In our comparison with baseline methods, we set $K$ to 2 for computing top-$K$ sparsified DP APPR. We also present results on the effect of $K$ with different $K$ values. The graph sampling rate is set to $q'=9\%$ for all datasets, and $M=70$ nodes are chosen randomly and uniformly to generate DP-APPR vectors. 
Experiments are conducted on a server with an Nvidia K80 GPU, a 6-core Intel CPU, and 56 GiB RAM. Results are based on the mean of 10 independent trials. The source code is available\footnote{The source code is available at: \url{https://github.com/Emory-AIMS/DPAR}.}.

\partitle{Our Approach and Baselines}
Our proposed algorithms using the DP-APPR with exponential mechanism (options I and II in Algorithm \ref{alg.dp_appr_em}) are referred to as \textbf{DPAR-EM0} and \textbf{DPAR-EM1}, respectively, and our algorithm using the DP-APPR with Gaussian mechanism is referred to as \textbf{DPAR-GM}. 

We compare our proposed algorithms with two state-of-the-art methods achieving node DP for GNN and one baseline method: 1) \textbf{SAGE} \cite{daigavane2021node} samples subgraphs of 1-hop neighbors of each node to train 1-layer GNNs with the GraphSAGE \cite{hamilton2017inductive} model. 2) \textbf{GAP}  \cite{sajadmanesh2023gap} uses aggregation perturbation and MLP-based encoder and classifier with DP-SGD and a bounded node degree and number of hops. 3) \textbf{Features} is a baseline method that only uses node feature as an independent input to train the GNN model and does not consider the structural information of the graph. \textbf{Features} utilizes the original DP-SGD to achieve node DP. Note that it is equal to the case where we use a one-hot vector as each node's APPR vector in Algorithm \ref{alg.dp_gnn} (i.e., no correlation with other nodes is used). We included this baseline to help characterize the datasets and calibrate the results, i.e., a good performance of the method may suggest that the topological structure of the particular dataset has limited benefit in training GNN. The models \textbf{DPARNoDP} and \textbf{GAPNoDP} indicate the respective methods (\textbf{DPAR}, \textbf{GAP}) with no DP protection.

\partitle{Inference Phase}
As suggested in \cite{bojchevski2020scaling}, instead of computing the APPR vectors for all testing nodes and generating predictions based on their APPR vectors, we use power iteration during inference:
{\begin{small}
\begin{equation} \label{equ.power_iter}
    Q^{(0)}=H, \quad Q^{(p)}=(1-\alpha) D^{-1} A Q^{(p-1)}+\alpha H, p \in [1, ..., P].
\end{equation}
\end{small}}
where $H$ is the representation matrix of testing nodes generated by the trained private model, with the input being the feature matrix of the testing nodes;  $D$ and $A$ are the degree matrix and adjacency matrix of the graph containing only testing nodes, respectively. The final output of power iteration $Q^{(P)}$ will be input into a softmax layer to generate the predictions for testing nodes. We set $P=2$ and the teleportation constant $\alpha=0.25$ as suggested in \cite{bojchevski2020scaling} in our experiments.

\subsection{Privacy vs. Accuracy Trade-off}

We use the value of privacy budget $\epsilon$ (with fixed $\delta$ chosen to be roughly equal to the inverse of each dataset’s number of training nodes) to represent the level of privacy protection and use the test accuracy for node classification to indicate the model's utility. Table \ref{tab:main_results} shows the results of our proposed methods and the baselines in all datasets, where the total privacy budget is evenly divided between DP-APPR and DP-SGD. 
In comparison to \textbf{GAP} and \textbf{SAGE}, our methods show superior test accuracy under the same privacy budget on all datasets. For instance, when $\epsilon=1$, our methods (\textbf{DPAR-GM}, \textbf{DPAR-EM0}, or \textbf{DPAR-EM1}) achieve the highest test accuracy of \textbf{0.3421/0.8569/0.8927/0.934/0.8948} on Cora-ML/MS Academic/CS/Reddit/Physics datasets respectively. The best accuracy achieved by the baselines (\textbf{GAP} or \textbf{SAGE}) is 0.34/0.6563/0.66/0.7047 /0.8192 on the corresponding datasets, indicating a test accuracy improvement by \textbf{0.62$\%$/30.6$\%$/35.3$\%$/32.5$\%$/9.23$\%$} respectively. The performance improvement demonstrates our method's superior ability to balance the privacy-utility trade-off on training graph datasets with privacy considerations.

\begin{figure}[t]
\centering
\includegraphics[width=0.25\textwidth]{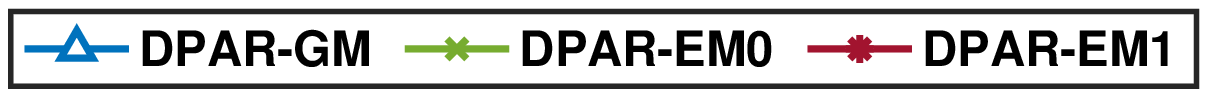}
\\
\vspace{-0.2cm}
\subfigure[K=4, $\epsilon_{sgd}$=2.0]{
\includegraphics[width=0.18\textwidth]{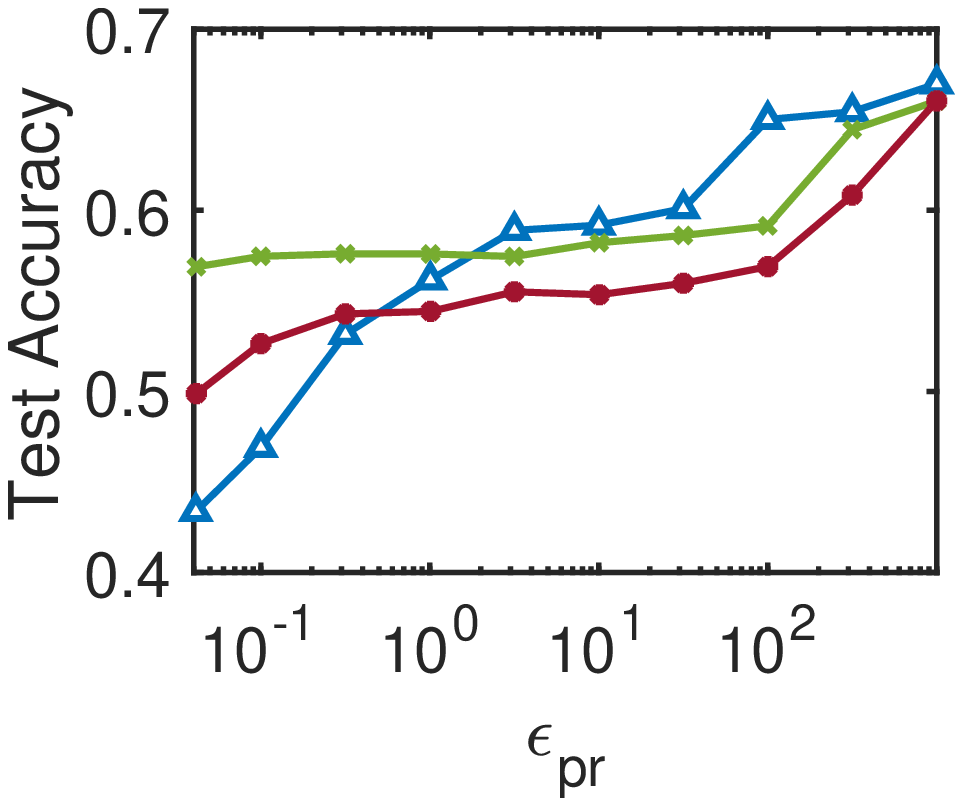}
}
\subfigure[K=4, $\epsilon_{sgd}$=8.0]{
\includegraphics[width=0.18\textwidth]{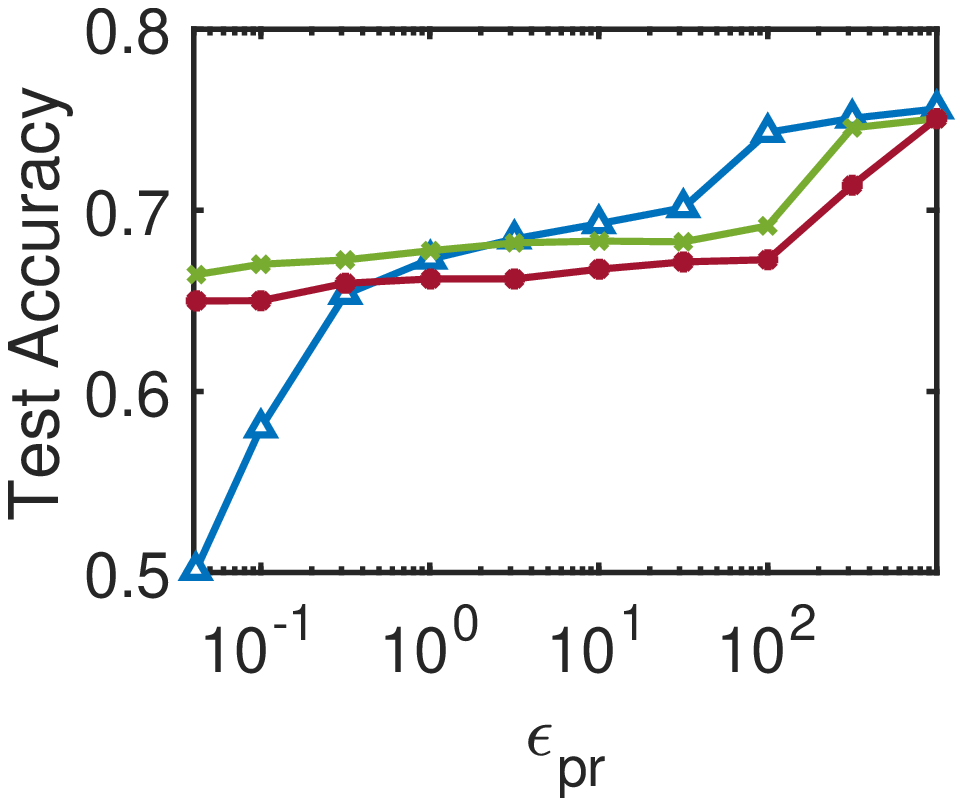}
}
\vspace{-0.4cm}
\subfigure[K=16, $\epsilon_{sgd}$=2.0]{
\includegraphics[width=0.18\textwidth]{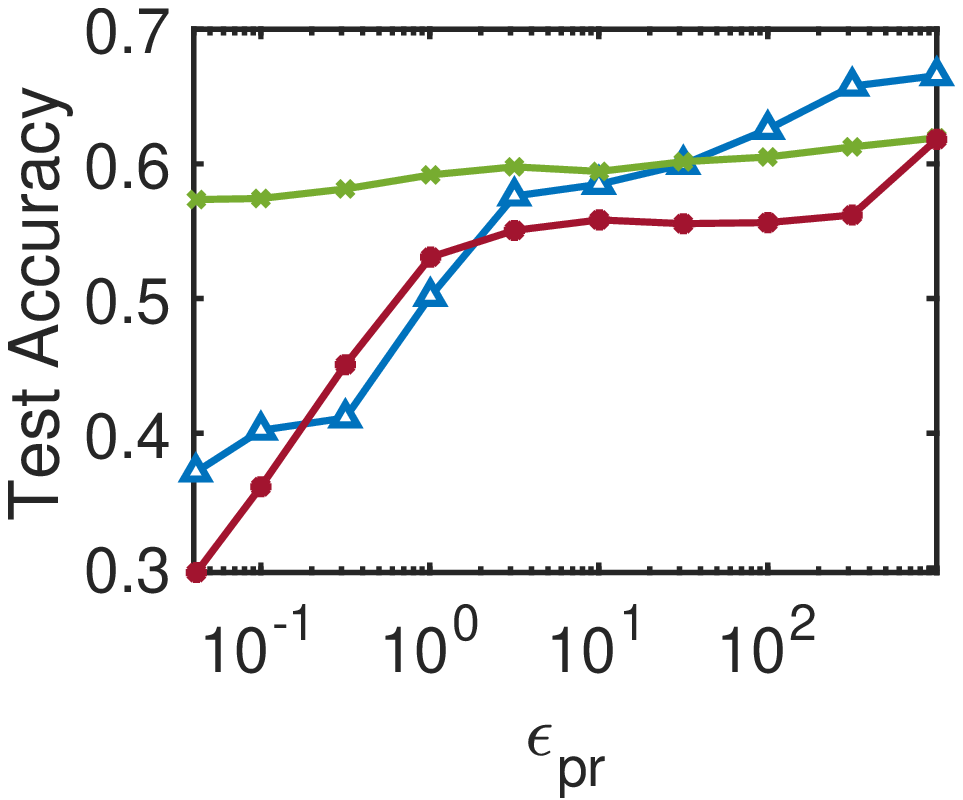}
}
\subfigure[K=16, $\epsilon_{sgd}$=8.0]{
\includegraphics[width=0.18\textwidth]{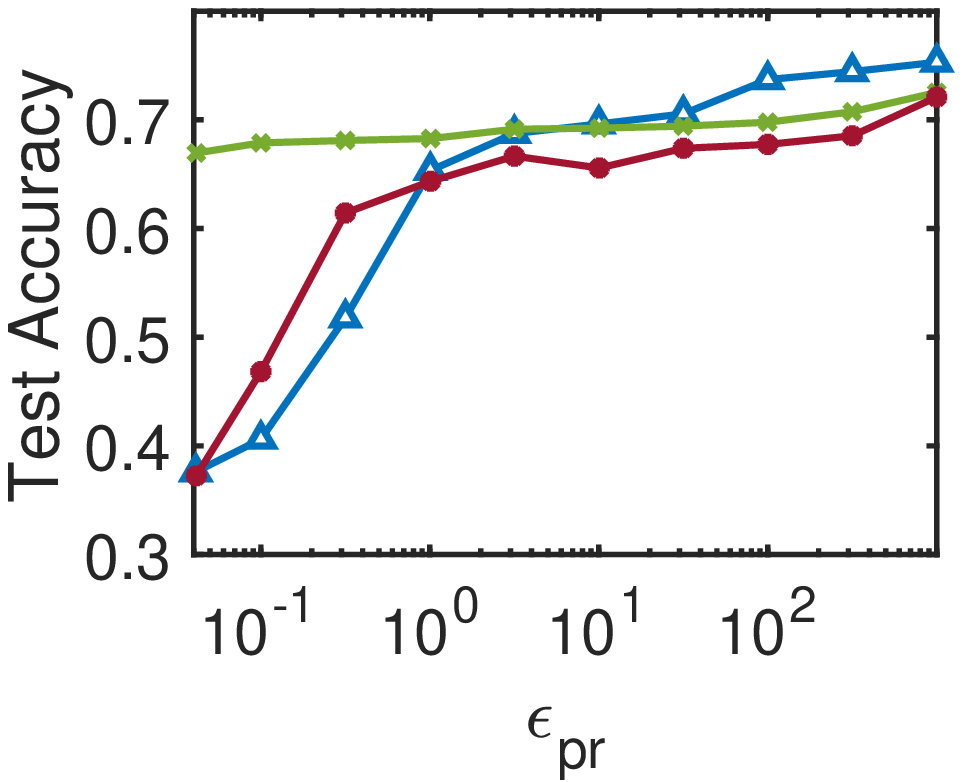}
}
\caption{Relationship between privacy budget $\epsilon$ (fixed $\delta=2 \times 10^{-3}$) and test accuracy on Cora-ML dataset.}
\label{fig:cora_eps_vs_acc_appr}
\end{figure}

Existing research in the graph neural network community suggests that features alone, especially for heterophilic graphs, can sometimes result in better-trained node classification models with MLP as the backend architecture compared to state-of-the-art GNN models \cite{maurya2022simplifying}. For the Cora-ML dataset, which has a low edge density, the \textbf{Features} approach outperforms our methods when $\epsilon$ is small (e.g., 1). This is because our methods allocate part of the privacy budget to protect graph structure information, which may not be as critical, while \textbf{Features} uses its entire privacy budget to protect node features without considering graph structure information. However, as $\epsilon$ increases (e.g., 8), our methods outperform \textbf{Features}.

Our proposed methods protect the graph structure and node features independently via the decoupled framework. Different graphs possess unique characteristics, and the relative significance of structure information and node features can differ among them. Accordingly, our methods are able to allocate the total privacy budget differently to protect node features and structures, which leads to more precise and tunable privacy protection for graph data that includes both feature and structural information.

\begin{figure}[t]
\centering
\includegraphics[width=0.2\textwidth]{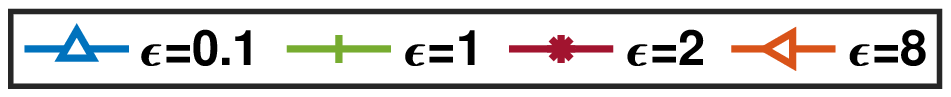}
\\
\subfigure[DPAR-GM]{
\includegraphics[width=0.155\textwidth]{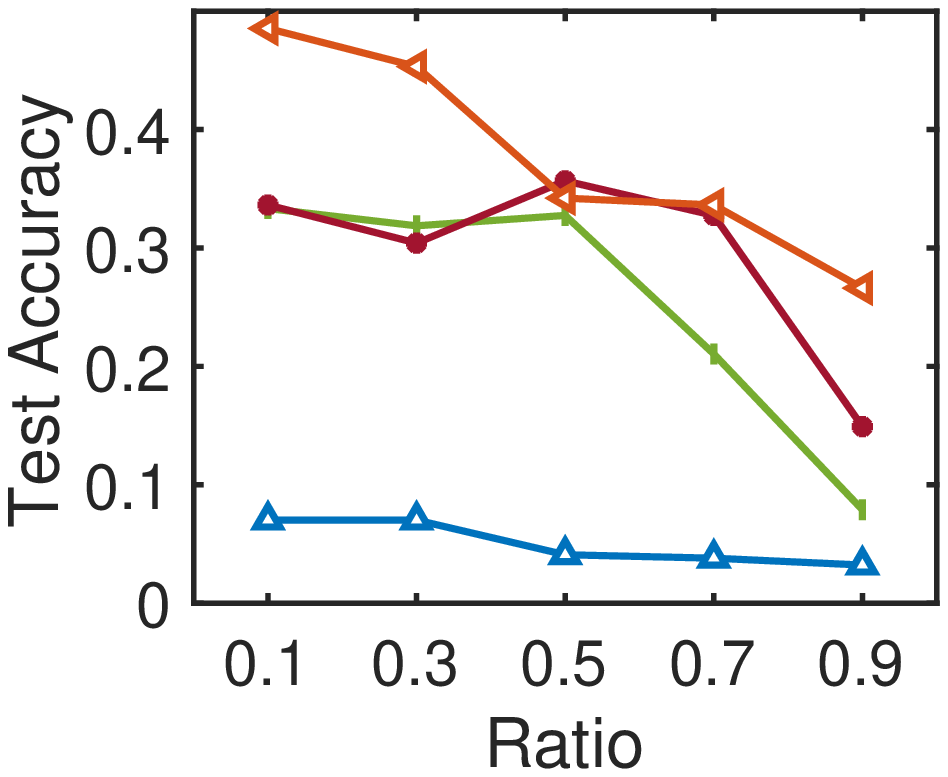}
}
\hspace{-3mm}
\subfigure[DPAR-EM0]{
\includegraphics[width=0.155\textwidth]{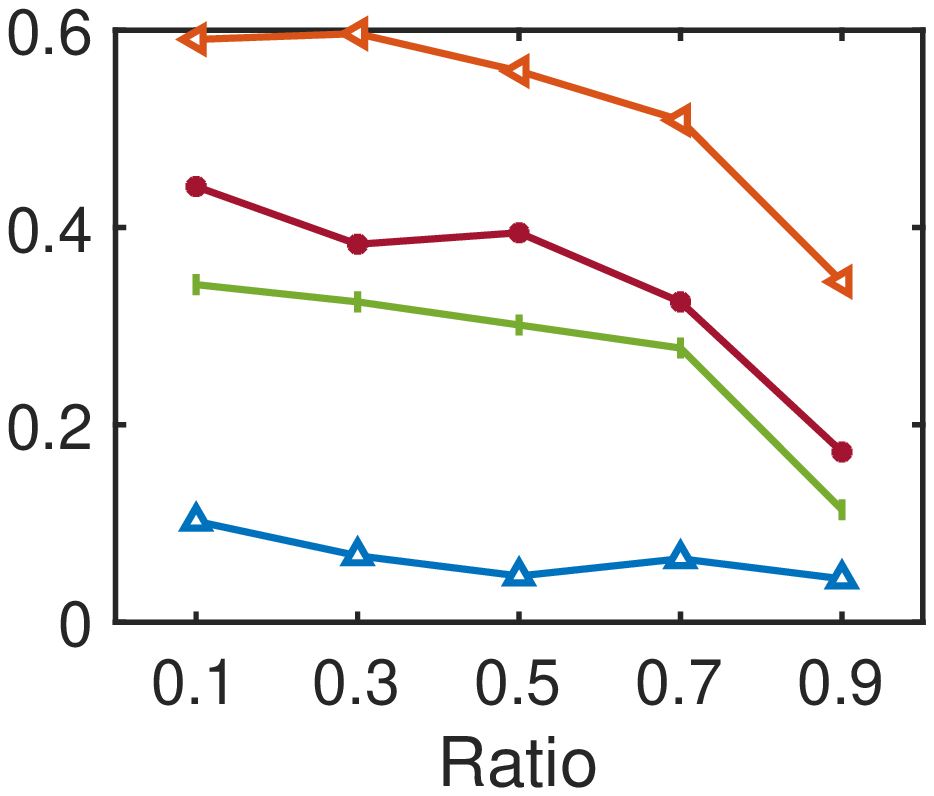}
}
\hspace{-3mm}
\subfigure[DPAR-EM1]{
\includegraphics[width=0.155\textwidth]{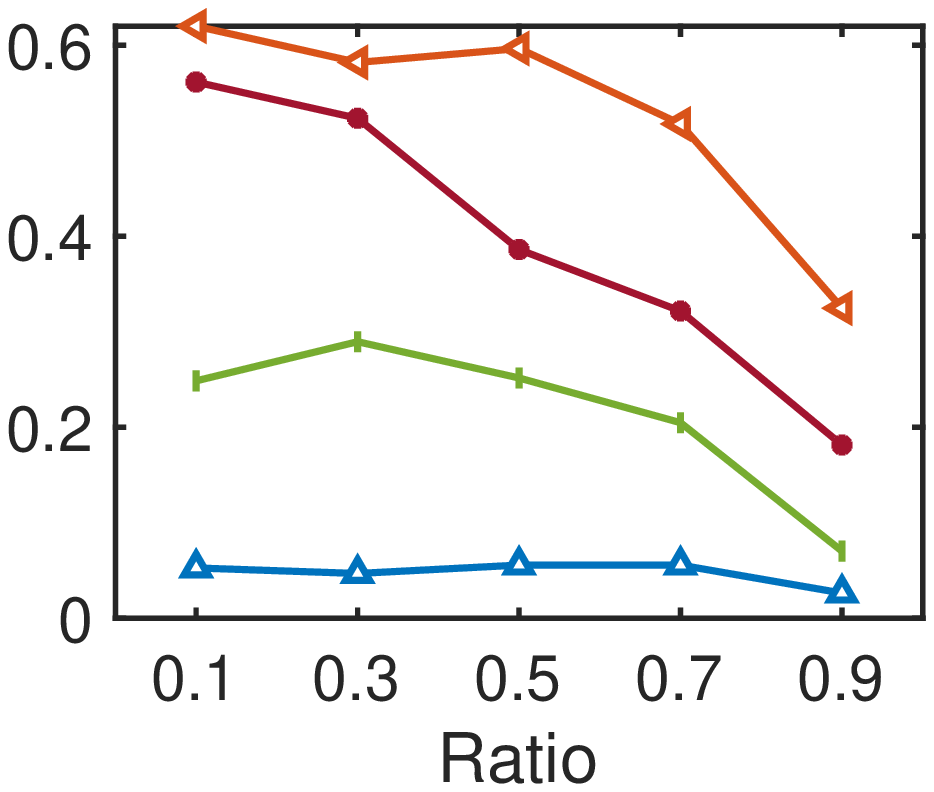}
}
\vspace{-3mm}
\caption{Cora-ML. The privacy budget $\epsilon$ ratio for DP-ARRP}
\label{fig:cora_ratio}
\end{figure}

\begin{figure}[t]
\centering
\subfigure[DPAR-GM]{
\includegraphics[width=0.155\textwidth]{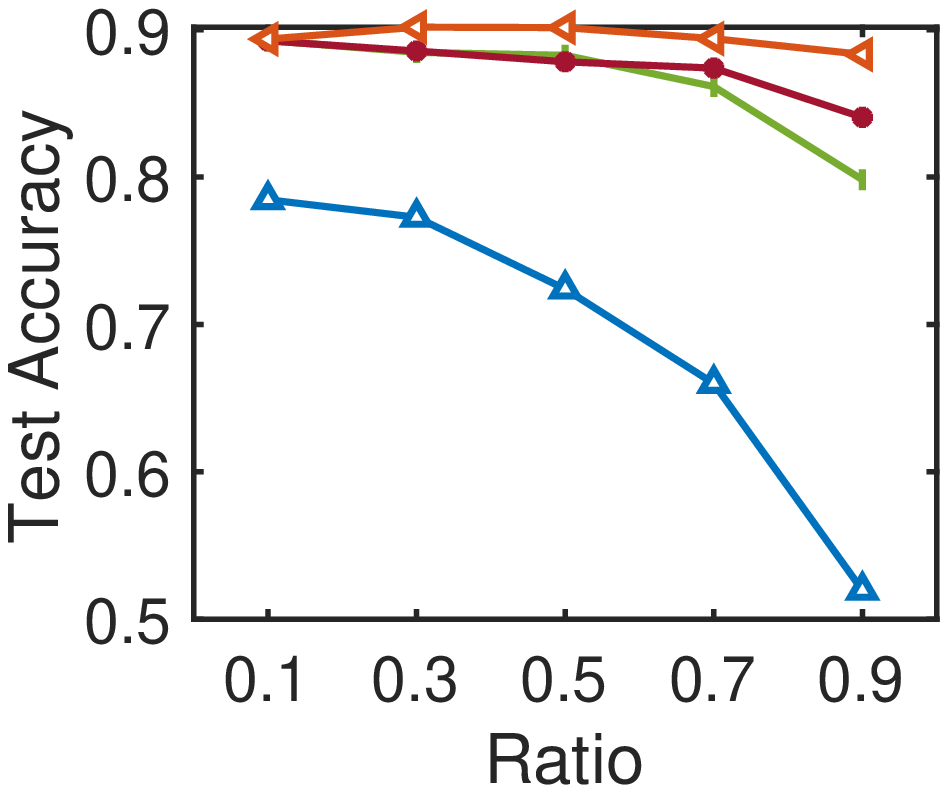}
}
\hspace{-3mm}
\subfigure[DPAR-EM0]{
\includegraphics[width=0.155\textwidth]{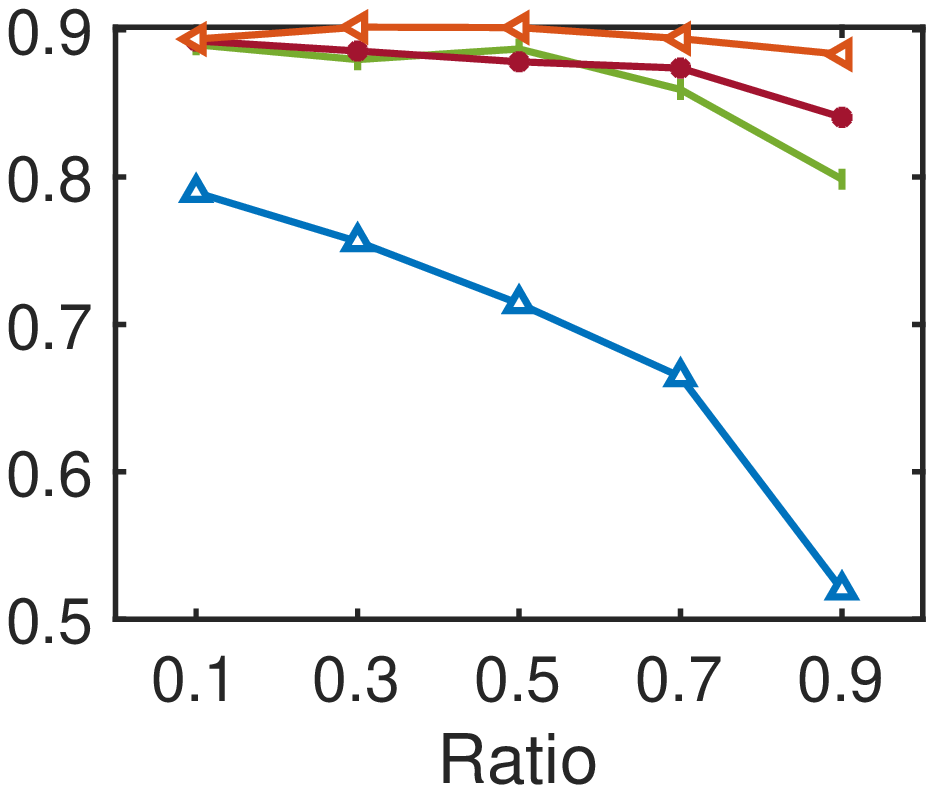}
}
\hspace{-3mm}
\subfigure[DPAR-EM1]{
\includegraphics[width=0.155\textwidth]{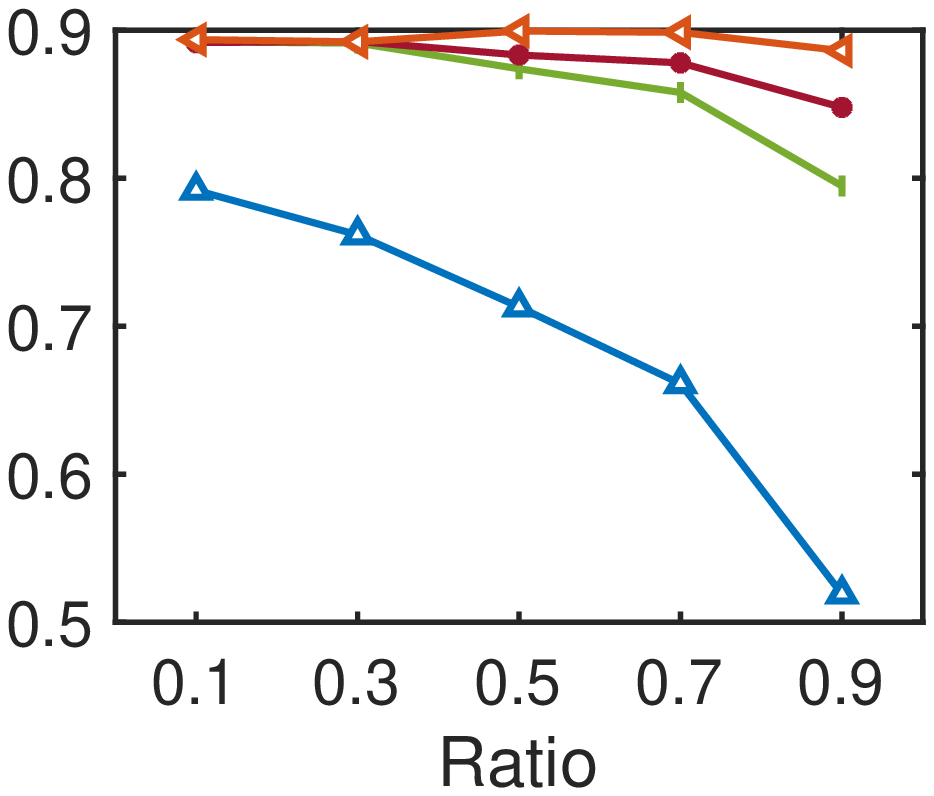}
}
\vspace{-3mm}
\caption{CS. The privacy budget $\epsilon$ ratio for DP-ARRP}
\label{fig:cs_ratio}
\end{figure}

\begin{figure}[t]
\centering
\subfigure[DPAR-GM]{
\includegraphics[width=0.155\textwidth]{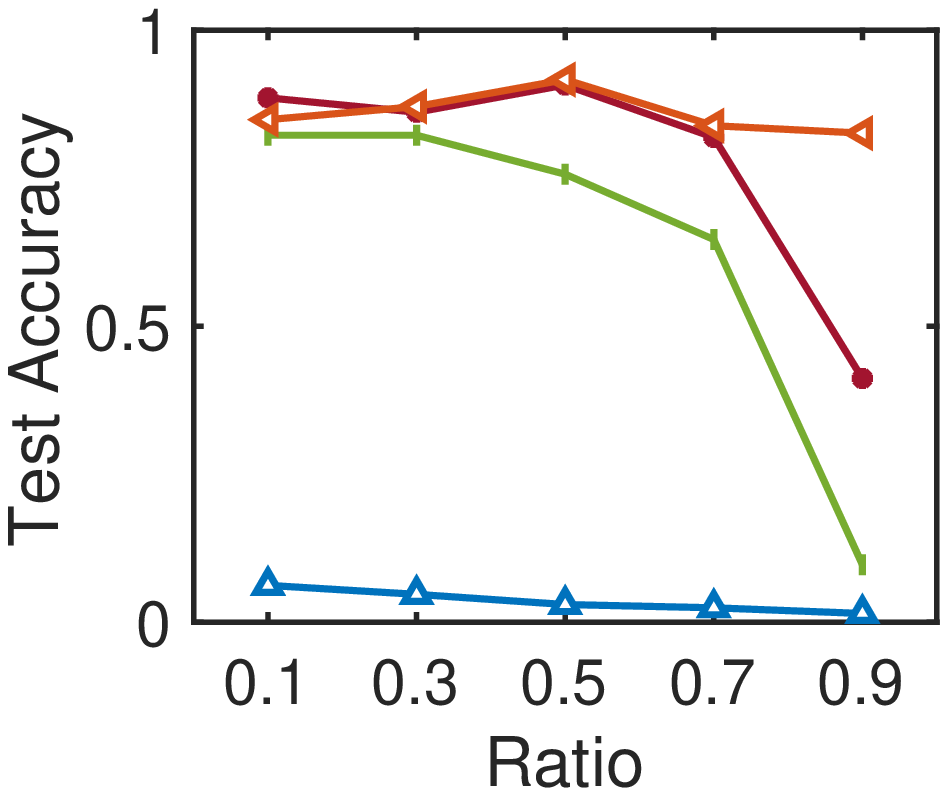}
}
\hspace{-3mm}
\subfigure[DPAR-EM0]{
\includegraphics[width=0.155\textwidth]{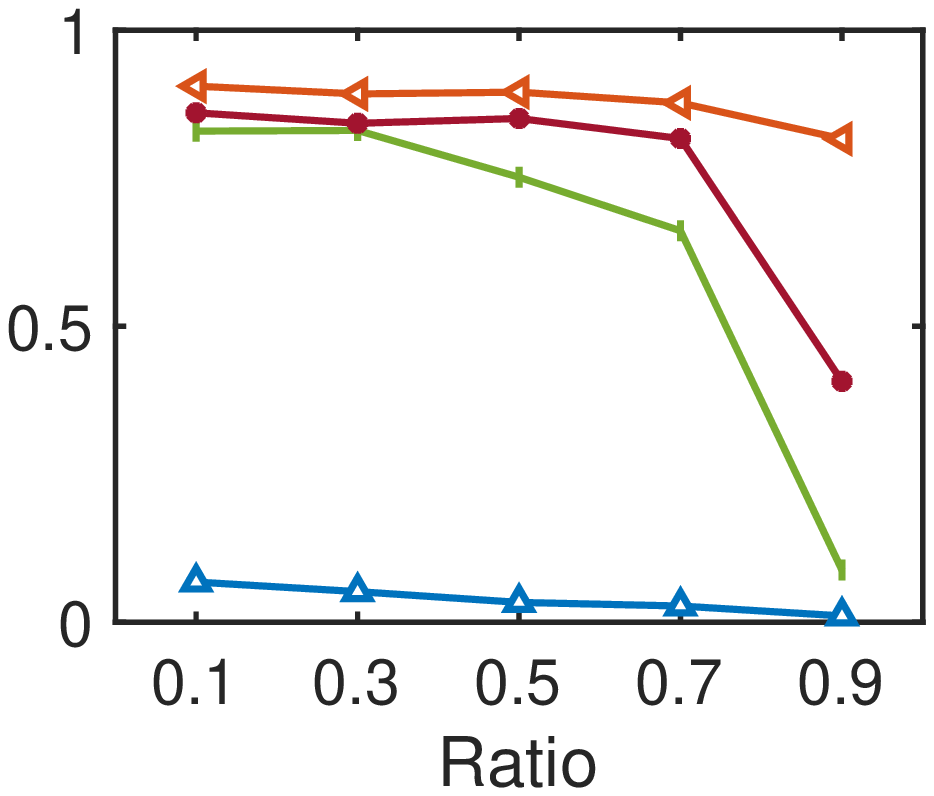}
}
\hspace{-3mm}
\subfigure[DPAR-EM1]{
\includegraphics[width=0.155\textwidth]{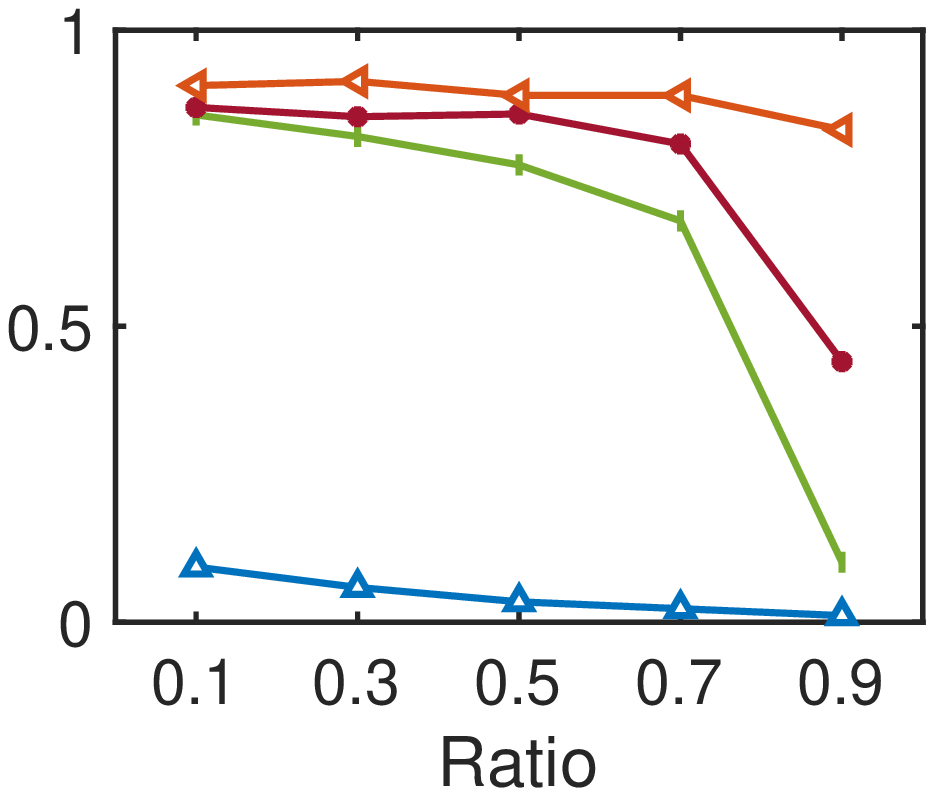}
}
\vspace{-3mm}
\caption{MS Academic. The privacy budget $\epsilon$ ratio for DP-ARRP}
\label{fig:msac_ratio}
\end{figure}

\partitle{Ablation Study of Different DP-APPR Methods}
To further study the impact of DP-APPR on the model accuracy, in Figure \ref{fig:cora_eps_vs_acc_appr}, we fix $\epsilon_{sgd}$ (privacy budget for DP-SGD) and use varying $\epsilon_{pr}$ (privacy budget for DP-APPR) as the x-axis. For \textbf{DPAR-GM} and \textbf{DPAR-EM1}, the higher the $\epsilon_{pr}$, the less noise is added when calculating the APPR vector for each training node. This allows a better chance for each node to aggregate representations from more important nodes using more precise importance scores. Hence these models have higher test accuracy compared to \textbf{DPAR-EM0}. In contrast, for \textbf{DPAR-EM0}, noise in DP-APPR will only affect the output of the indexes of the top-$K$ most relevant nodes corresponding to the source node, but not their importance scores. \textbf{DPAR-EM0} achieves better performance than \textbf{DPAR-GM} and \textbf{DPAR-EM1} when the privacy budget $\epsilon_{pr}$ is small, this is because \textbf{DPAR-EM0} uses $1/K$ as the importance score for all nodes (considering nodes equally important), which diminishes the negative effect of less important or irrelevant nodes having high importance scores due to the noise in \textbf{DPAR-GM} and \textbf{DPAR-EM1}. 
Both \textbf{DPAR-EM0} and \textbf{DPAR-EM1} are based on the exponential mechanism designed for identifying the index of the top-$K$ accurately. Therefore, when the privacy budget is small, they outperform \textbf{DPAR-GM}. However, when the privacy budget is large, they all have a good chance to find the indexes of the actual top-$K$, and \textbf{DPAR-GM} becomes gradually better than \textbf{DPAR-EM0} and \textbf{DPAR-EM1}, as the Gaussian noise has better privacy loss composition property.

\begin{figure}[t]
\centering
\subfigure[DPAR-GM]{
\includegraphics[width=0.155\textwidth]{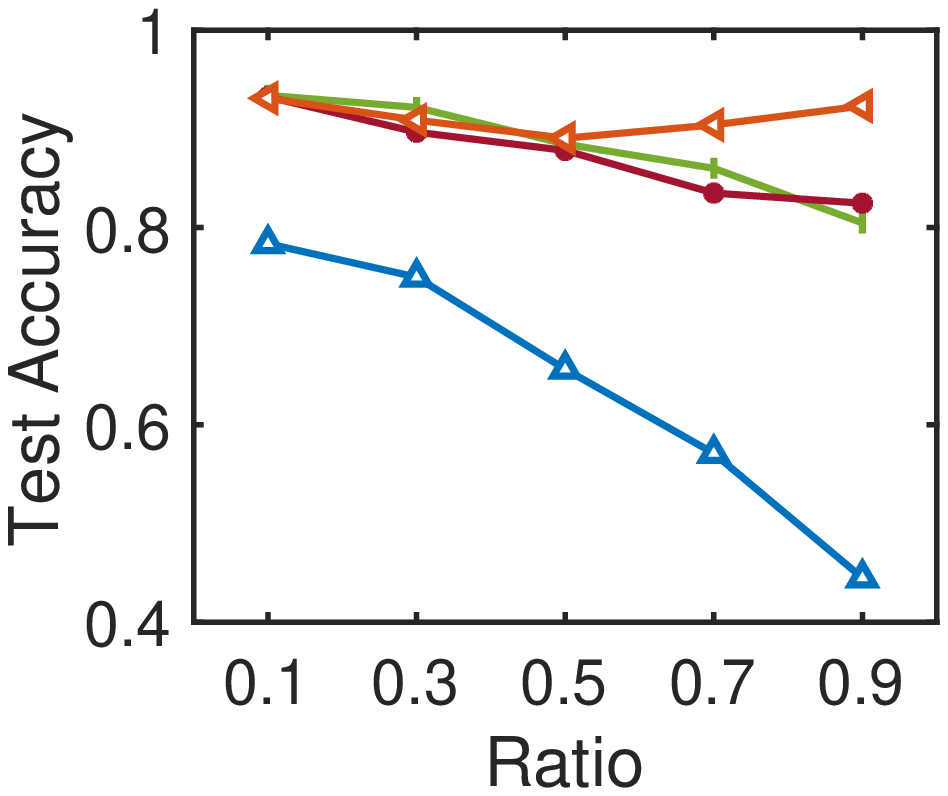}
}
\hspace{-3mm}
\subfigure[DPAR-EM0]{
\includegraphics[width=0.155\textwidth]{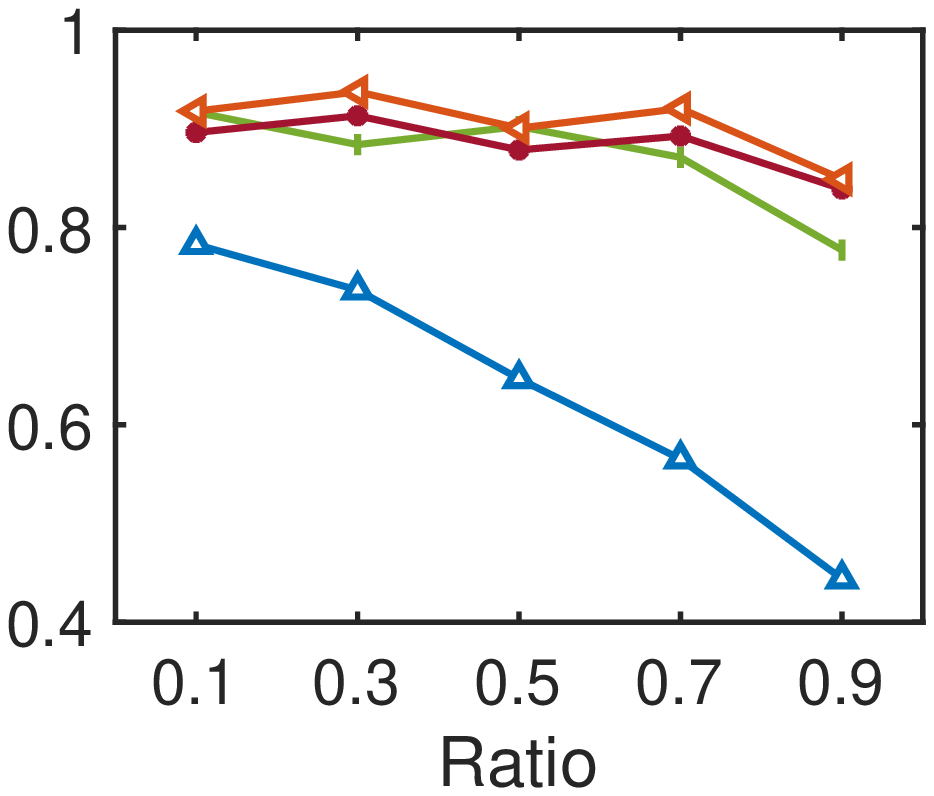}
}
\hspace{-3mm}
\subfigure[DPAR-EM1]{
\includegraphics[width=0.155\textwidth]{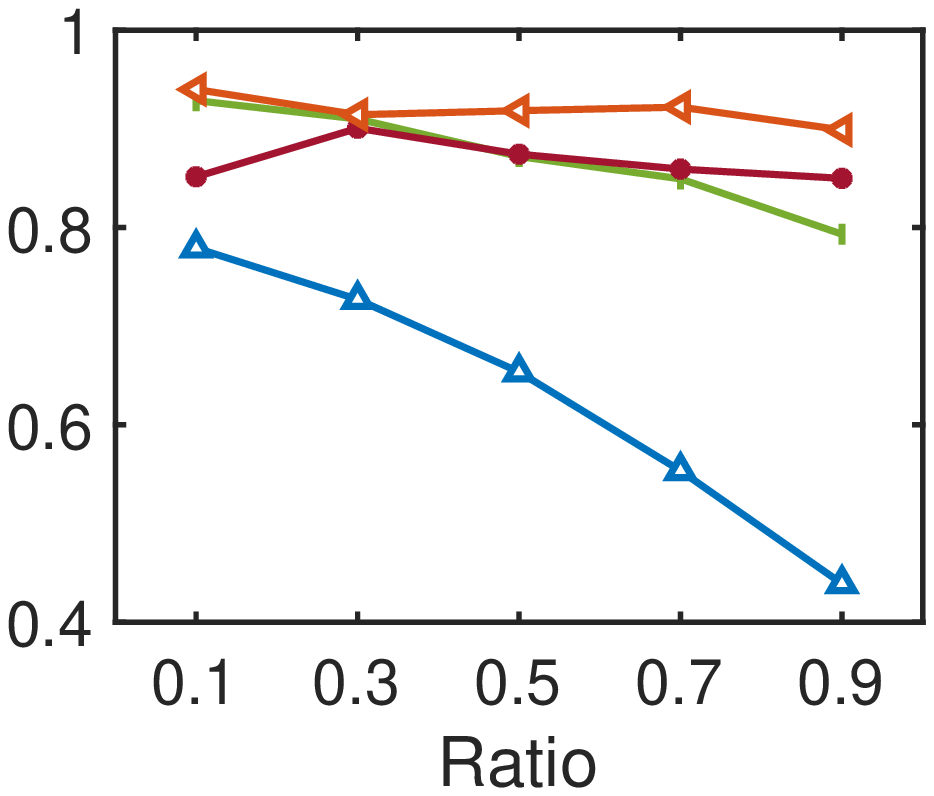}
}
\vspace{-3mm}
\caption{Reddit. The privacy budget $\epsilon$ ratio for DP-ARRP}
\label{fig:reddit_ratio}
\end{figure}

\begin{figure}[t]
\centering
\subfigure[DPAR-GM]{
\includegraphics[width=0.155\textwidth]{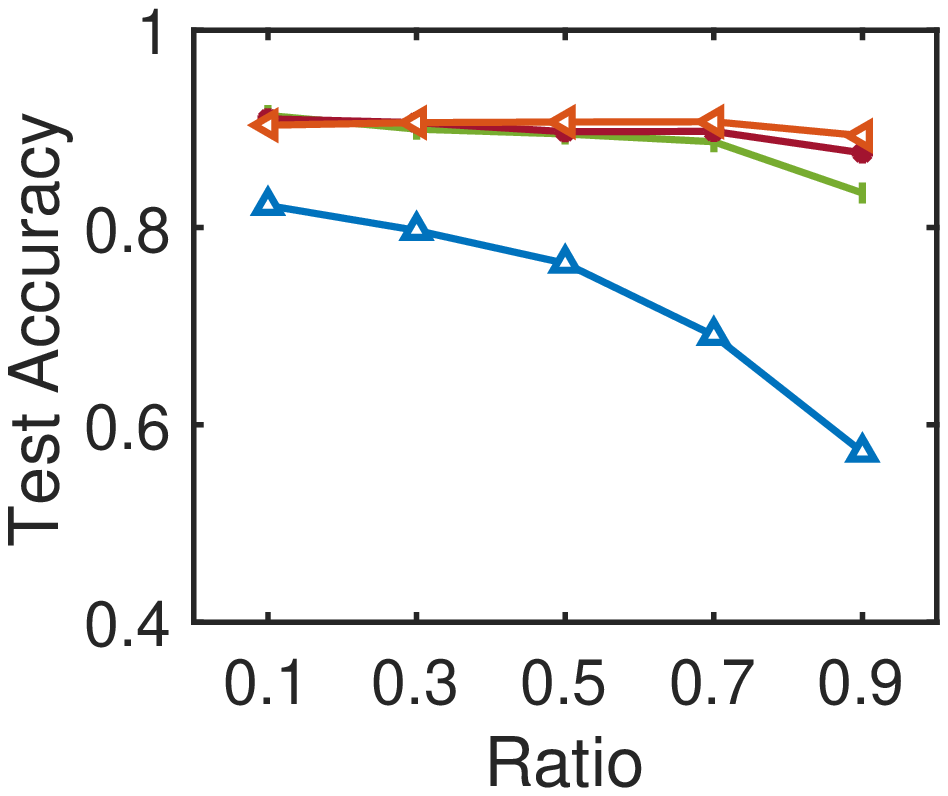}
}
\hspace{-3mm}
\subfigure[DPAR-EM0]{
\includegraphics[width=0.155\textwidth]{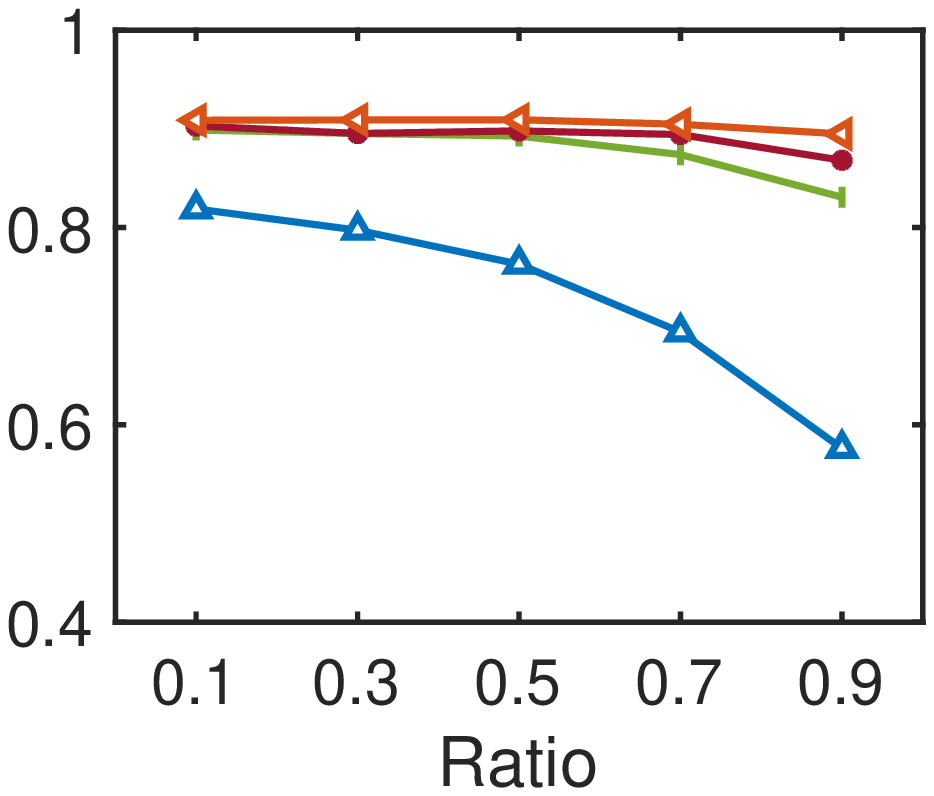}
}
\hspace{-3mm}
\subfigure[DPAR-EM1]{
\includegraphics[width=0.155\textwidth]{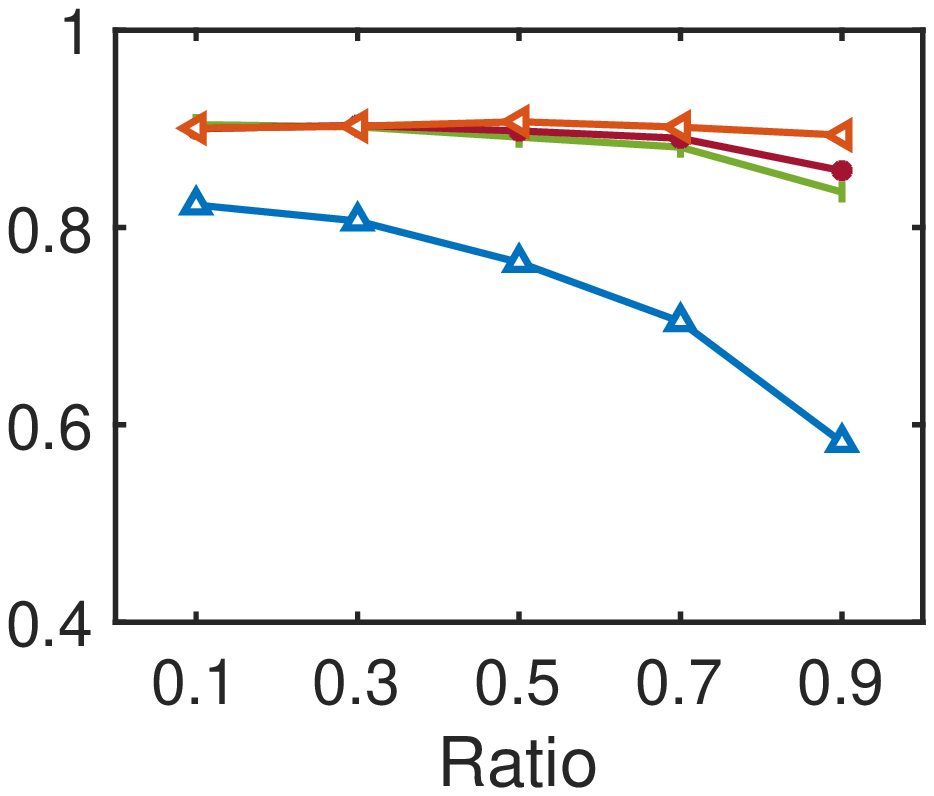}
}
\vspace{-3mm}
\caption{Physics. The privacy budget $\epsilon$ ratio for DP-ARRP}
\label{fig:physics_ratio}
\end{figure}

\subsection{Privacy Protection Effectiveness}

\partitle{Privacy Budget Allocation between DP-APPR and DP-SGD}
The total privacy budget is divided between DP-APPR and DP-SGD. We compare the impact of the budget allocation by changing the ratio of the total privacy budget used by each of them. Figure \ref{fig:cora_ratio}, \ref{fig:cs_ratio}, \ref{fig:msac_ratio}, \ref{fig:reddit_ratio}, and \ref{fig:physics_ratio} report the model test accuracy with varying ratios of the total privacy budget used for DP-APPR for the five datasets respectively, and they share the same legend as in Figure \ref{fig:cora_ratio}. A lower ratio means a smaller privacy budget is allocated for DP-APPR while more is allocated for DP-SGD. The impact of the ratio on the privacy-utility trade-off is closely aligned with the characteristics of each dataset. From Figure~\ref{fig:cora_ratio}, the model achieves better accuracy when the ratio is lower, regardless of the total privacy budget. This is because of the characteristics of the Cora-ML dataset, as its node features are more important than its structure. Interestingly, when the privacy budget is small, Figure~\ref{fig:cs_ratio}, \ref{fig:msac_ratio}, \ref{fig:reddit_ratio}, and \ref{fig:physics_ratio} show that information from node features is crucial for all datasets. Allocating more privacy budget to DP-SGD can learn more useful information from the node features and improve model accuracy. When the privacy budget is large, e.g., $\epsilon=8$, we find that in MS Adacemic and CS datasets, the model can achieve the best results when the budget is equally divided, suggesting the importance of learning from both the structure information and features.

\vspace{-0.2cm}
\subsection{Effects of Privacy Parameters}

We use the Cora-ML dataset as an example to demonstrate the effects of the parameters specific to privacy, including the clipping bound in DP-APPR,  the number of nodes $M$ in DP-APPR, the number of selected top-$K$ entries in DP-APPR, the batch size in DP-SGD, and the clipping bound in DP-SGD. By default, we set the batch size to 60, the clipping bound $C_1$ in DP-APPR-GM (Algorithm 3 in Appendix) to 0.01, the clipping bound $C_2$ in DP-APPR-EM (Algorithm 1) to 0.001, the gradient norm clipping bound $C$ for DP-SGD to 1, and $M$ to 70. We analyze them individually while keeping the rest constant as the default values.

\partitle{Clipping Bound in DP-APPR ($C_1$ and $C_2$)}
Figure \ref{fig:cora_clipAPPR} shows the effect of clipping bound in DP-APPR on the model's test accuracy. Given a constant total privacy budget, the standard deviation of the noise added to the APPR vectors is proportional to the clipping bound ($C_1$ in DP-APPR-GM and $C_2$ in DP-APPR-EM). Hence, choosing a smaller clipping bound in general can avoid adding too much noise and result in better accuracy. However, too small of a clipping bound may degrade the accuracy due to the clipping error. In experiments, we set $C_1$ to be 0.01 and $C_2$ to be 0.001 for all datasets.

\vspace{-0.2cm}
\begin{figure}[!htbp]
\centering
\subfigure[K=4]{
\includegraphics[width=0.18\textwidth]{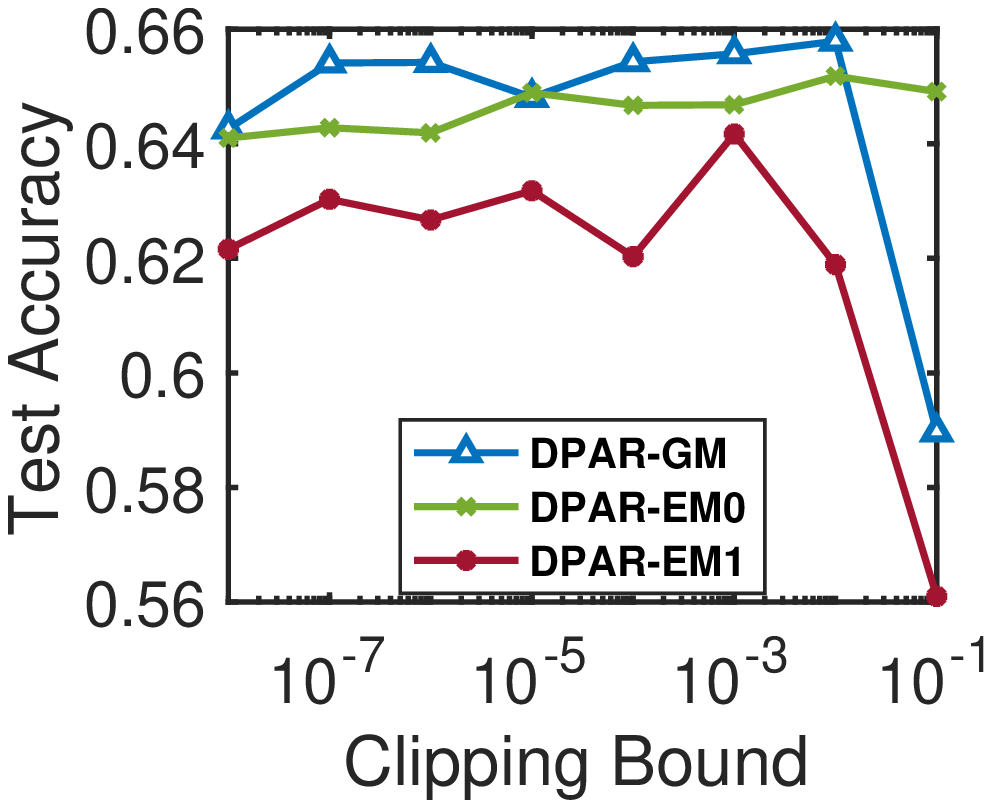}
}
\subfigure[K=16]{
\includegraphics[width=0.18\textwidth]{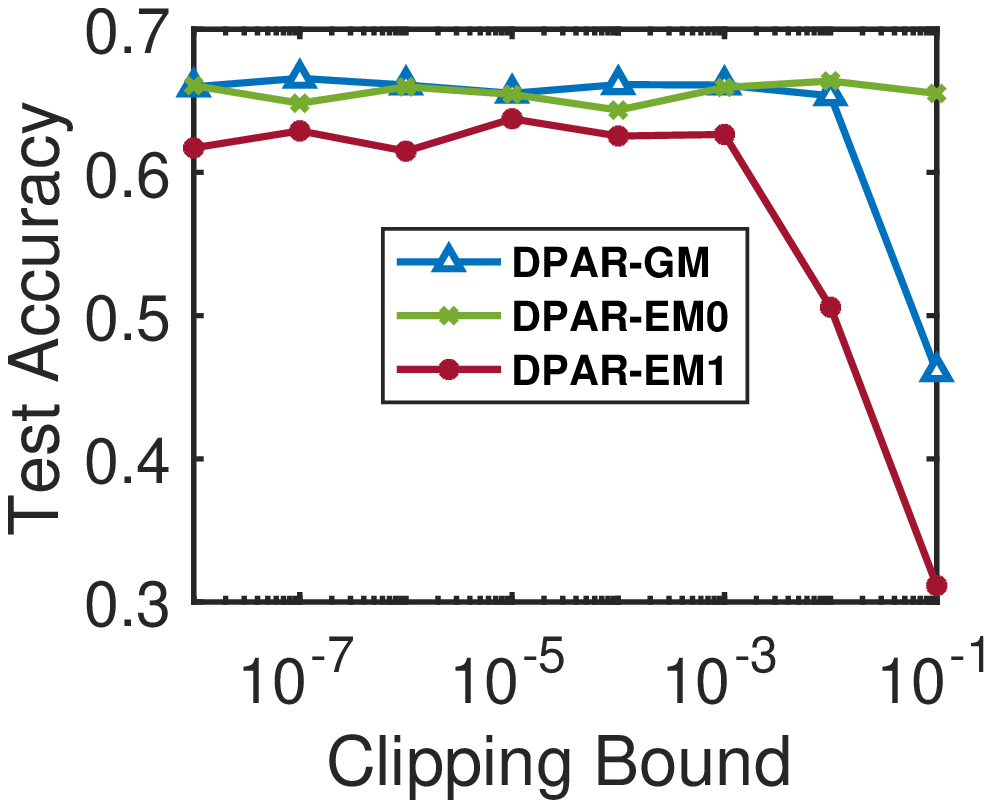}
}
\vspace{-1em}
\caption{Cora-ML. Relationship between clipping bound of DP-APPR and model test accuracy. Total privacy $(\epsilon, \delta)$ = $(8, 2 \times 10^{-3})$.}
\label{fig:cora_clipAPPR}
\end{figure}
\vspace{-0.5cm}

\begin{figure}[!htbp]
\centering
\includegraphics[width=0.25\textwidth]{figures/legend_appr.eps}
\\
\vspace{-0.6em}
\subfigure[$\epsilon$ = 1.0]{
\includegraphics[width=0.18\textwidth]{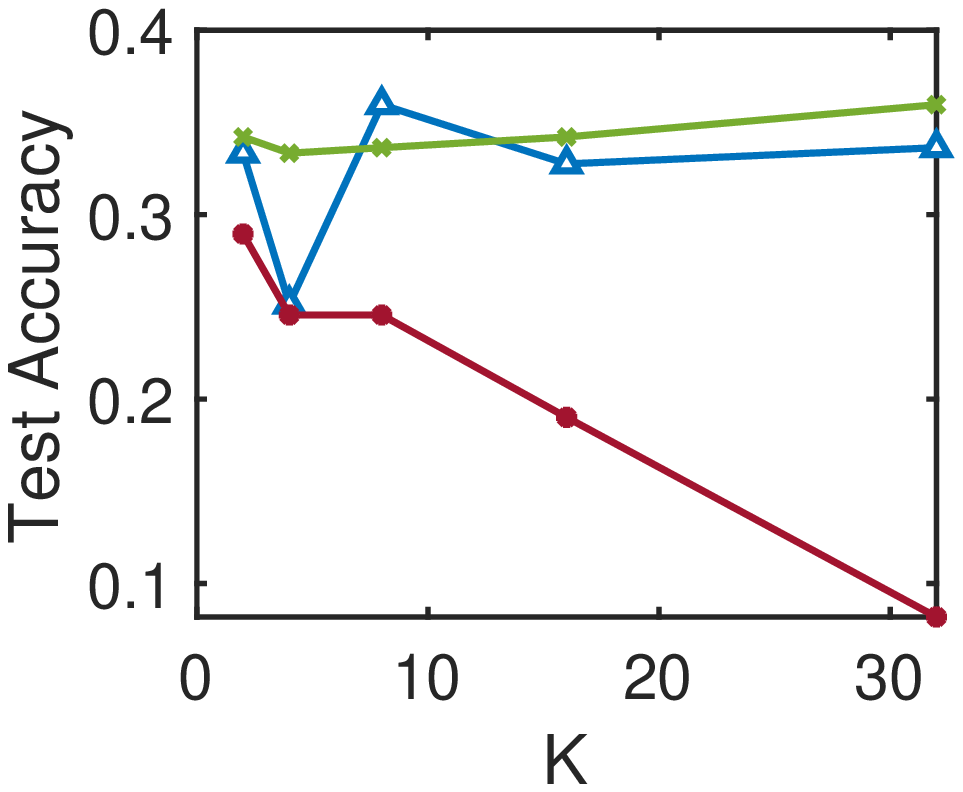}
}
\subfigure[$\epsilon$ = 8.0]{
\includegraphics[width=0.18\textwidth]{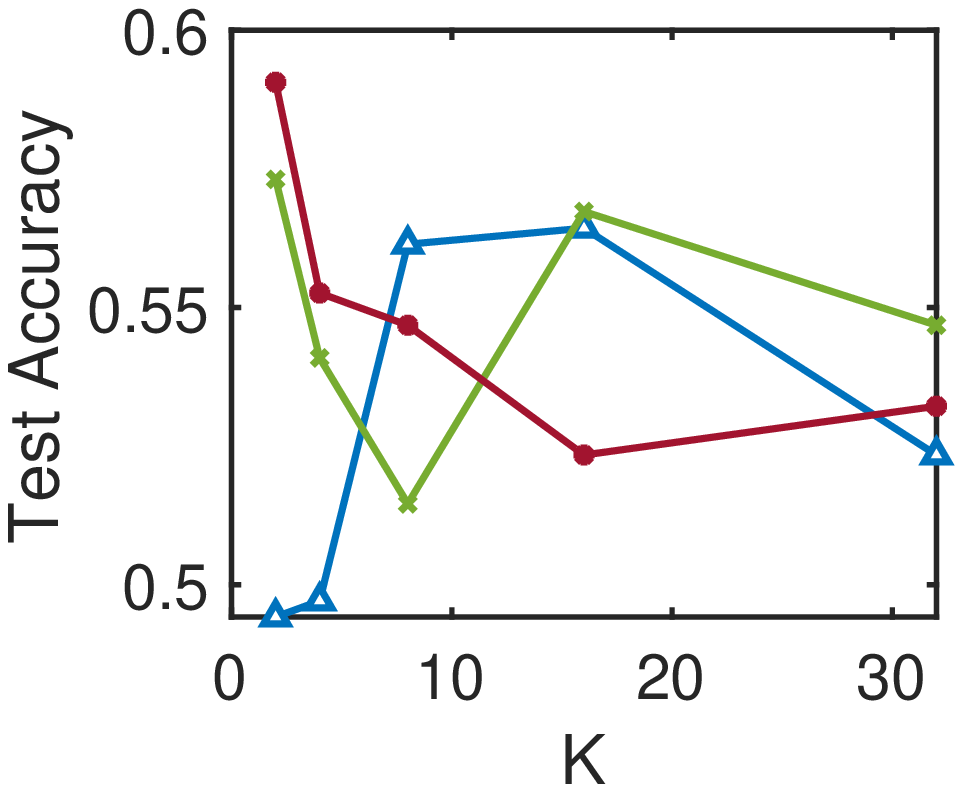}
}
\vspace{-1em}
\caption{Cora-ML: Relationship between the number of  top-$K$ entries in DP-APPR vector and model test accuracy.}
\label{fig:cora_vary_k}
\end{figure}
\vspace{-0.5cm}

\partitle{Number of Top-$K$ in DP-APPR ($K$)} 
Figure \ref{fig:cora_vary_k} shows the accuracy with respect to varying $K$ (2, 4, 8, 16, 32) for the top-$K$ selection in DP-APPR.
The Gaussian mechanism's sensitivity depends on the $\ell_2$ norm of the APPR vector. We use a clip bound $C_1$ to restrict the $\ell_2$ norm of the APPR vector, therefore the privacy guarantees are linked to $C_1$, not $K$. $K$ impacts the number of non-zero entries in each DP-APPR vector, influencing node feature embeddings. A small $K$ may not capture enough neighbors while a higher $K$ may include more irrelevant nodes as "neighbors", adversely affecting aggregated information. For the Exponential mechanism, we clip each APPR vector value by $C_2$ to control sensitivity. The privacy guarantee is dependent on both $C_2$ and $K$. A larger $K$ means more noise for each entry, affecting accuracy. From Figure \ref{fig:cora_vary_k}, we can observe that DPAR-EM1 results highlight this effect, while DPAR-EM0 mitigates it by assigning a value of $1/K$ without additional noise. In our experiments compared against baselines, we use a fixed $K$ = 2 for all datasets.

We also investigate the impact of batch size in DP-SGD ($B$), the clipping bound in DP-SGD ($C$), and the number of nodes in DP-APPR ($M$). We have included the results in Appendix \ref{append.priv_protect_visual}.

\vspace{-0.1cm}
\section{Related Work} \label{Related Work}
\vspace{-0.1cm}
\partitle{Differentially Private Graph Publishing}
Works on privacy-preserving graph data publishing aim to release the entire graph \cite{nguyen2015differentially,gao2019sharing,xiao2014differentially,jorgensen2016publishing}, or the statistics or properties of the original graph \cite{ahmed2019publishing,lu2014exponential,chen2014correlated,kasiviswanathan2013analyzing,zhang2015private,day2016publishing}, with the DP guarantee. Different from those works, our work focuses on training GNN models on private graph datasets and publishing the model that satisfies node-level DP.

\partitle{Differentially Private Graph Neural Networks}
Yang et al. \cite{yang2020secure} propose using DP-SGD to train a graph generation model with edge-DP, protecting link privacy. Sajadmanesh et al. \cite{sajadmanesh2021locally} develop a GNN training algorithm based on local DP (LDP) to protect node features' privacy, excluding edge privacy. Zhang et al. \cite{zhang2021graph} apply LDP and the functional mechanism \cite{zhang2012functional} to secure user's sensitive features in graph embedding models for recommendations. Lin et al. \cite{lin2022towards} suggest a privacy-preserving framework for decentralized graphs, ensuring LDP on edge DP for each user. Epasto et al. \cite{epasto2022differentially} introduce a DP Personalized PageRank algorithm with edge-level DP for graph embedding. These efforts do not provide strict node-level DP for features and edges in GNN model training. Few recent works \cite{daigavane2021node, sajadmanesh2023gap} achieve node-level DP for GNNs, yet compromise model accuracy due to training restrictions on hops or layers. Our results show DPAR outperforms these methods.

\vspace{-0.1cm}
\section{Conclusion} \label{Conclusion}
We addressed private learning for GNN models with a two-stage framework: DP approximate personalized PageRank (DP-APPR) and DP-SGD, safeguarding graph structure and node features respectively. We developed two DP-APPR algorithms using Gaussian and exponential mechanisms to learn PageRank for each node's most relevant neighborhood. DP-APPR protects nodes' edge information and limits sensitivity during DP-SGD training, enhancing nodes' feature information protection. Experiments on real-world graph datasets show our methods outperform existing ones in privacy-utility tradeoff. Future work includes developing tighter privacy DP-APPR algorithms and adaptive privacy budget strategies (e.g., between DP-APPR and DP-SGD based on dataset characteristics), as well as generalizing our approach to various types of graphs.



\begin{acks}
This research was partially supported by the National Science Foundation (NSF) under  CNS-2124104, CNS-2125530, CNS-2302968, IIS-2312502, NCS-2319449, and the National Institute of Health (NIH) under R01ES033241, R01LM013712, K25DK135913.

\end{acks}

\clearpage
\bibliographystyle{ACM-Reference-Format}
\bibliography{sample-base.bib}



\appendix

\section{Proof for Theorem \ref{theo.privacy_em}}
\label{append.prof_privacy_em}
\begin{proof}
We first consider the privacy loss of outputting the noisy APPR vector $\tilde{\mathbf{p}}_{\left(v_{i}\right)}^{\prime}$ for node $v_{i}$ in Algorithm \ref{alg.dp_appr_em}. For each element in the APPR vector, we use its value as its utility score. Since each element is nonnegative and clipped by the constant $C_2$, the $\ell_1$ sensitivity $\Delta(u)$ of each element is equal to $C_2$. By adding the one-shot Gumbel noise $\textit{Gumbel}(\beta\mathbf{I})$ where
$\beta=C_2 / \epsilon$ to the clipped APPR vector $\tilde{\mathbf{p}}\left(v_{i}\right)$, option I selects $K$ indices with the largest noisy values and satisfies $(\epsilon_1, \delta)$-DP where $\epsilon_1 = 2 \cdot \min \left\{ K \epsilon, K \epsilon \left(\frac{e^{2 \epsilon}-1}{e^{2 \epsilon}+1}\right)+ \epsilon \sqrt{2 K \ln (1 / \delta)}\right\}$ according to Corollary \ref{coro.em_gumbel}. Option II uses the Laplace mechanism \cite{dwork2014algorithmic} to report $K$ selected noisy values. By adding Laplace noise $\textit{Laplace} \left(\mathrm{KC}_{2} / \mathrm{\epsilon}_{2}\right)$ to each clipped element, option II costs an additional $\epsilon_2$ privacy budget \cite{dwork2014algorithmic} since the $\ell_1$ sensitivity of each element is $C_2$, and satisfies $(\epsilon_1 + \epsilon_2, \delta)$-DP. 

Now we consider the privacy loss of Algorithm \ref{alg.dp_appr_em} which outputs $M$ noisy APPR vectors. We use the optimal composition theorem in \cite{kairouz2017composition} which argues that for $k$ sub-mechanisms, each with an $(\epsilon, \delta)$-DP guarantee, the overall privacy guarantee is $(\epsilon_g, \delta_g)$, where $\epsilon=\epsilon_{g} /(2 \sqrt{k \ln (e+\epsilon_{g} / \delta_{g})})$ and $\delta=\delta_{g} / 2 k$. By substituting $M$ for $k$ and $\epsilon_1$ / $\epsilon_1 + \epsilon_2$ (option I/option II) for $\epsilon$, the privacy loss of Algorithm \ref{alg.dp_appr_em} with option I is $(\epsilon_{g_1}, 2 M \delta)$, where
$\epsilon_1 = \epsilon_{g_1} /\left(2 \sqrt{M \ln \left(e+\epsilon_{g_1} / 2 M \delta\right)}\right)$, 
and the privacy loss of Algorithm \ref{alg.dp_appr_em} with option II is $(\epsilon_{g_2}, 2 M \delta)$, where 
$\epsilon_1 + \epsilon_2=\epsilon_{g_2} /\left(2 \sqrt{M \ln \left(e+\epsilon_{g_2} / 2 M \delta\right)}\right)$.
\end{proof}

\section{Gaussian Mechanism (DP-APPR-GM)}
\label{append.dp-appr-gm}
We propose another DP APPR algorithm (DP-APPR-GM) based on the Gaussian mechanism \cite{dwork2014algorithmic} and output perturbation. DP-APPR-GM utilizes a similar sampling and clipping strategy
to limit the sensitivity of the APPR vector and directly adds Gaussian noise to
each element to achieve DP. As shown in Algorithm \ref{alg.dp_appr_gm}, for each node $v$, we clip the $\ell_2$ norm of its APPR vector $\mathbf{p}_{(v)}$ (line 6) and add the calibrated Gaussian noise to each element in the clipped $\mathbf{p}_{(v)}$ (line 8). We then select the top-$K$ largest entries in $\tilde{\mathbf{p}}_{(v)}$ to get a sparse vector  $\tilde{\mathbf{p}}_{(v)}'$ (line 10).

\partitle{Privacy Analysis of DP-APPR-GM}
Using the properties of the Gaussian mechanism and the optimal composition theorem \cite{kairouz2017composition}, we establish the overall privacy guarantee for the DP-APPR-GM algorithm. Note that the DP guarantee is independent of $K$, in contrast with DP-APPR-EM.

\begin{theorem}
Let $\epsilon > 0$ and $\delta \in (0,1]$, Algorithm \ref{alg.dp_appr_gm} is $(\epsilon_{g}, 2M\delta)$-differentially private where $\epsilon=\epsilon_{g} /\left(2 \sqrt{M \ln \left(e+\epsilon_{g} / 2 M \delta\right)}\right)$.
\end{theorem}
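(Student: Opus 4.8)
The plan is to mirror the two-stage structure of the proof of Theorem~\ref{theo.privacy_em}, replacing the exponential-mechanism argument for a single APPR vector with a Gaussian-mechanism argument, and then reusing the identical optimal-composition step over the $M$ computed vectors. The single new observation that drives the whole argument is that here the top-$K$ selection is applied \emph{after} noise has been added, so it is pure post-processing rather than the privatizing step itself; this is exactly why the final bound is independent of $K$, in contrast to DP-APPR-EM.

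First I would bound the privacy loss of releasing one noisy vector $\tilde{\mathbf{p}}_{(v)}$. Fix two node-level neighboring graphs differing in a single node and its edges. Clipping the $\ell_2$ norm of the APPR vector to $C_1$ forces $\|\hat{\mathbf{p}}_{(v)}\|_2 \le C_1$ under both graphs, so the $\ell_2$ sensitivity of the clipped vector is at most $2C_1$ (the difference of two vectors each of norm at most $C_1$), regardless of how many entries the changed node perturbs. Calibrating the Gaussian noise in Algorithm~\ref{alg.dp_appr_gm} to this $\ell_2$ sensitivity, the Gaussian mechanism~\cite{dwork2014algorithmic} guarantees $(\epsilon,\delta)$-DP for $\tilde{\mathbf{p}}_{(v)}$. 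The subsequent top-$K$ truncation producing $\tilde{\mathbf{p}}_{(v)}'$ is a deterministic function of the already-private output, so by post-processing invariance it adds no privacy cost and the per-vector release remains $(\epsilon,\delta)$-DP.

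Next I would compose over the $M$ sampled source nodes: since a single changed node can alter the APPR vectors of all $M$ sources, one neighboring change touches up to $M$ of the per-vector releases, so the overall guarantee is the $M$-fold composition of $(\epsilon,\delta)$-DP mechanisms. Plugging $k=M$ and $\delta_g = 2M\delta$ (so that the per-mechanism $\delta = \delta_g/2k = \delta$ is recovered) into the optimal composition theorem of~\cite{kairouz2017composition} yields the stated $(\epsilon_g, 2M\delta)$-DP with $\epsilon = \epsilon_g/(2\sqrt{M\ln(e+\epsilon_g/2M\delta)})$, exactly as in Theorem~\ref{theo.privacy_em} but without the extra $\epsilon_2$ term. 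The main obstacle is the sensitivity accounting in the first step: one must argue carefully that node-level neighbors can move the clipped APPR vector by at most $2C_1$ in $\ell_2$ norm and that the noise is calibrated to precisely this quantity, so that the single-vector release genuinely meets the $(\epsilon,\delta)$ target fed into the composition; once that and the ``one node touches up to $M$ vectors'' observation are established, the post-processing invariance and the composition plug-in are routine.
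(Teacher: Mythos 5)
Your route is exactly the paper's: establish $(\epsilon,\delta)$-DP for each noisy APPR vector via the Gaussian mechanism, then apply the optimal composition theorem of Kairouz et al.\ with $k=M$ and $\delta_g = 2M\delta$ (the same substitution the paper makes), justified by the observation that a single changed node can touch all $M$ rows. Your explicit invocation of post-processing invariance for the top-$K$ truncation is a small improvement in rigor over the paper, which only remarks that the guarantee is independent of $K$ without naming the post-processing argument.

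There is, however, one concrete discrepancy in your per-vector step, and it cuts both ways. You correctly observe that clipping only guarantees $\|\hat{\mathbf{p}}_{(v)}\|_2 \le C_1$ on \emph{each} of the two neighboring graphs, so the worst-case $\ell_2$ sensitivity is $2C_1$ --- in fact $\sqrt{2}\,C_1$, since APPR vectors are non-negative and hence $\langle \hat{\mathbf{p}}_{(v)}, \hat{\mathbf{p}}_{(v)}'\rangle \ge 0$. But you then assert that Algorithm \ref{alg.dp_appr_gm} calibrates its noise to this sensitivity, which is not what the algorithm does: it sets $\sigma = \sqrt{2\ln(1.25/\delta)}\, C_1/\epsilon$, i.e., noise calibrated to sensitivity $C_1$. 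With your (correct) sensitivity bound, the algorithm as written yields only roughly $(\sqrt{2}\,\epsilon, \delta)$-DP per vector, so your argument as stated proves the theorem for a modified algorithm with inflated noise rather than for Algorithm \ref{alg.dp_appr_gm} itself. The paper's own proof sidesteps this by silently identifying the clipping bound $C_1$ with the sensitivity --- the DP-SGD convention, which is valid when a neighboring change adds or removes one term of norm at most $C$ from a sum, but not when it can arbitrarily replace the entire output vector, as is the case here where the differing node may reroute probability mass throughout the APPR vector. So your bookkeeping has actually surfaced a constant-factor gap in the paper's one-line proof; to make your argument match the stated theorem for the stated algorithm, you would need either to justify $\Delta_2 \le C_1$ (which fails in the worst case) or to carry the factor $\sqrt{2}$ into the per-vector $\epsilon$ before composing, which would correspondingly loosen the final $\epsilon_g$.
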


\begin{proof}
We utilize the optimal composition theorem in \cite{kairouz2017composition} which argues that for $k$ sub-mechanisms, each with an $(\epsilon, \delta)$-DP guarantee, the overall privacy guarantee is $(\epsilon_g, \delta_g)$-DP, where $\epsilon=\epsilon_{g} /(2 \sqrt{k \ln (e+\epsilon_{g} / \delta_{g})})$ and $\delta=\delta_{g} / 2 k$. In Algorithm \ref{alg.dp_appr_gm}, the noisy APPR vector for each node satisfies $(\epsilon, \delta)$-DP by the Gaussian mechanism independently. Since the returned APPR matrix contains the noisy APPR vectors of $M$ nodes, the number of components for composition is $M$. We substitute $M$ for k and $2 M \delta$ for $\delta_{g}$, which can conclude the proof. 
\end{proof}

\section{Proof for Theorem \ref{theo.privacy_gnn}}
\label{append.prof_dpgnn}
\begin{proof}
Denote $\mu_0$ the Gaussian distribution with mean $0$ and variance $1$. Assume $\mathbb{D}'$ is the neighboring feature dataset of $\mathbb{D}$, which differs at $i^{\dagger}$ such that $\mathbf{x}_{i^{\dagger}}' \neq \mathbf{x}_{i^{\dagger}}$. Without loss of generality, we assume $\nabla f(\mathbf{x}_i) = \bm{0}$, for any $\mathbf{x}_i \in \mathbb{D}$, while $\nabla f(\mathbf{x} _{i^{\dagger}}') = \bm{e}_1$. Recall that the DP-APPR matrix is $\boldsymbol{\Pi}$, where $\boldsymbol{\Pi}_{i:}$ is the $i$-th row and the DP-APPR vector for node $i$, while $\boldsymbol{\Pi}_{:j}$ is the $j$-th column of $\boldsymbol{\Pi}$. In addition, we can assume that $\|\boldsymbol{\Pi}_{:j}\|_1 \leq \tau$ due to the clipping in line 3, for all $j = 1,\dots,N$, and denote $\mu_{\tau}$ the Gaussian distribution with mean $\tau$ and variance 1. Then, we have $\mathbb{E}[\mathcal{G}(\mathbb{D})]$ and $\mathbb{E}\left[\mathcal{G}\left(\mathbb{D}^{\prime}\right)\right]$ below,
\begin{equation}
\resizebox{.85\hsize}{!}{$
    \begin{split}
       \mathbb{E}[\mathcal{G}(\mathbb{D})] = [ \frac{|\mathcal{B}|}{N} \sum_{j \neq i^{\dagger}, j \notin \mathcal{N}\left(i^{\dagger}\right)} G_{j} ] + 
       [ \frac{|\mathcal{B}|}{N} \sum_{j \neq i^{\dagger}, j \in \mathcal{N}\left(i^{\dagger}\right)} G_{j} ] +
       [ \frac{|\mathcal{B}|}{N} G_{i} ] \\ = 
       [ \frac{|\mathcal{B}|}{N} \sum_{j \neq i^{\dagger}, j \notin \mathcal{N}\left(i^{\dagger}\right)} \sum_{k \in \mathcal{N}(j)} \boldsymbol{\Pi}_{j k} \nabla f\left(\mathbf{x}_{k}\right) ] \\ 
       + [ \frac{|\mathcal{B}|}{N} \sum_{j \neq i^{\dagger}, j \in \mathcal{N}\left(i^{\dagger}\right)}\left(\sum_{k \in \mathcal{N}(j) \backslash i^{\dagger}} \boldsymbol{\Pi}_{j k} \nabla f\left(\mathbf{x}_{k}\right)+\boldsymbol{\Pi}_{j i^{\dagger}} \nabla f\left(\mathbf{x}_{i^{\dagger}}\right)\right) ] \\
       + [ \frac{|\mathcal{B}|}{N}\left(\sum_{k \in \mathcal{N}\left(i^{\dagger}\right) \backslash i^{\dagger}} \boldsymbol{\Pi}_{i^{\dagger} k} \nabla f\left(\mathbf{x}_{k}\right)+\boldsymbol{\Pi}_{i^{\dagger} i^{\dagger}} \nabla f\left(\mathbf{x}_{i^{\dagger}}\right)\right) ],
    \end{split}
$}
\end{equation}
which indicates $\mathcal{G}(\mathbb{D}) \sim \mu_0$.
\begin{equation}
\resizebox{.85\hsize}{!}{$
    \begin{split}
       \mathbb{E}\left[\mathcal{G}\left(\mathbb{D}^{\prime}\right)\right] = [ \frac{|\mathcal{B}|}{N} \sum_{j \neq i^{\dagger}, j \notin \mathcal{N}\left(i^{\dagger}\right)} G_{j} ] + [ \frac{|\mathcal{B}|}{N} \sum_{j \neq i^{\dagger}, j \in \mathcal{N}\left(i^{\dagger}\right)} G_{j}^{\prime} ] + [ \frac{|\mathcal{B}|}{N} G_{i}^{\prime} ] \\ =
       [ \frac{|\mathcal{B}|}{N} \sum_{j \neq i^{\dagger}, j \notin \mathcal{N}\left(i^{\dagger}\right)} \sum_{k \in \mathcal{N}(j)} \boldsymbol{\Pi}_{j k} \nabla f\left(\mathbf{x}_{k}\right) ] \\
       + [ \frac{|\mathcal{B}|}{N} \sum_{j \neq i^{\dagger}, j \in \mathcal{N}\left(i^{\dagger}\right)}\left(\sum_{k \in \mathcal{N}(j) \backslash i^{\dagger}} \boldsymbol{\Pi}_{j k} \nabla f\left(\mathbf{x}_{k}\right)+\boldsymbol{\Pi}_{j i} \nabla f\left(\mathbf{x}_{i^{\dagger}}^{\prime}\right)\right) ] \\
       + [ \frac{|\mathcal{B}|}{N}\left(\sum_{k \in \mathcal{N}\left(i^{\dagger}\right) \backslash i^{\dagger}} \boldsymbol{\Pi}_{i^{\dagger} k} \nabla f\left(\mathbf{x}_{k}\right)+\boldsymbol{\Pi}_{i^{\dagger} i^{\dagger}} \nabla f\left(\mathbf{x}_{i^{\dagger}}^{\prime}\right)\right) ] \\
       = \mathbb{E}[\mathcal{G}(\mathbb{D})]+\frac{|\mathcal{B}|}{N} \sum_{j=1}^{N} \boldsymbol{\Pi}_{j i^{\dagger}}\left(f\left(\mathbf{x}_{i^{\dagger}}^{\prime}\right)-f\left(\mathbf{x}_{i^{\dagger}}\right)\right) \\
       = \mathbb{E}[\mathcal{G}(\mathbb{D})]+\frac{|\mathcal{B}|}{N}\left\|\boldsymbol{\Pi}_{: i^{\dagger}}\right\|_{1} \leq \mathbb{E}[\mathcal{G}(\mathbb{D})]+\frac{|\mathcal{B}|}{N} \tau,\\
    \end{split}
$}
\end{equation}

which indicates $\mathcal{G}\left(\mathbb{D}^{\prime}\right) \sim \mu_{0}+\frac{|\mathcal{B}|}{N} \mu_{\tau}$.

In the following, we quantify the divergence between $\mathcal{G}$ and $\mathcal{G}^{\prime}$ by following the moments accountant \cite{abadi2016deep}, where we show that
\begin{small}
$
\mathbb{E}\left[\left(\frac{\mu(z)}{\mu_{0}(z)}\right)^{\lambda}\right] \leq \alpha,
$
\end{small}
and
\begin{small}
$
\mathbb{E}\left[\left(\frac{\mu_{0}(z)}{\mu(z)}\right)^{\lambda}\right] \leq \alpha,
$
\end{small}
for some explicit $\alpha$. To do so, the following is to be bounded for $v_{0}$ and $v_{1}$.

\begin{equation}
\resizebox{.65\hsize}{!}{$
\mathbb{E}_{z \sim v_{0}}\left[\left(\frac{v_{0}(z)}{v_{1}(z)}\right)^{\lambda}\right]=\mathbb{E}_{z \sim v_{1}}\left[\left(\frac{v_{1}(z)}{v_{0}(z)}\right)^{\lambda+1}\right]
$}
\end{equation}

Following \cite{abadi2016deep}, the above can be expanded with binomial expansion, which gives

\begin{equation}
\resizebox{.85\hsize}{!}{$
\begin{array}{l}
\mathbb{E}_{z \sim v_{1}}\left[\left(\frac{v_{1}(z)}{v_{0}(z)}\right)^{\lambda+1}\right]=\sum_{t=0}^{\lambda+1}(\lambda+1) \mathbb{E}_{z \sim v_{1}}\left[\left(\frac{v_{0}-v_{1}(z)}{v_{1}(z)}\right)^{t}\right] \\
=1+0+T_{3}+T_4+\ldots
\end{array}
$}
\end{equation}

Next, we bound $T_{3}$ by substituting the pairs of $v_{0}=\mu_{0}, v_{1}=\mu$ and $v_{0}=\mu, v_{1}=\mu_{0}$ in, and upper bound them, respectively.

For $T_{3}$, with $v_{0}=\mu_{0}, v_{1}=\mu$, we have

\begin{equation}
\resizebox{.85\hsize}{!}{$
\begin{aligned}
T_{3} &=\frac{(\lambda+1) \lambda}{2} \mathbb{E}_{z \sim \mu}\left[\left(\frac{\mu_{0}(z)-\mu(z)}{\mu(z)}\right)^{2}\right] = \frac{(\lambda+1) \lambda}{2} \mathbb{E}_{z \sim \mu}\left[\left(\frac{q \mu_{\tau}(z)}{\mu(z)}\right)^{2}\right] \\
&=\frac{q^{2}(\lambda+1) \lambda}{2} \int_{-\infty}^{+\infty} \frac{\left(\mu_{\tau}(z)\right)^{2}}{\mu_{0}(z)+q \mu_{\tau}(z)} d z \leq \frac{q^{2}(\lambda+1) \lambda}{2} \int_{-\infty}^{+\infty} \frac{\left(\mu_{\tau}(z)\right)^{2}}{\mu_{0}(z)} d z \\
&=\frac{q^{2}(\lambda+1) \lambda}{2} \mathbb{E}_{z \sim \mu_{0}}\left[\left(\frac{\mu_{\tau}(z)}{\mu_{0}(z)}\right)^{2}\right] = \frac{q^{2}(\lambda+1) \lambda}{2} \exp \left(\frac{\tau^{2}}{\sigma^{2}}\right) \\
& \leq \frac{q^{2}(\lambda+1) \lambda}{2}\left(\frac{\tau^{2}}{\sigma^{2}}+1\right) \leq \frac{q^{2} \tau^{2}(\lambda+1) \lambda}{\sigma^{2}},
\end{aligned}
$}
\end{equation}
where in the last inequality, we assume $\frac{\tau^{2}}{\sigma^{2}}+1 \leq 2 \frac{\tau^{2}}{\sigma^{2}}$, i.e., $\frac{\tau^{2}}{\sigma^{2}} \geq 1$. Thus, it requires $\sigma \leq \tau$. 

As a result,
\begin{small}
\begin{equation}
\alpha_{\mathcal{G}}(\lambda) \leq \frac{q^{2} \tau^{2}(\lambda+1) \lambda}{\sigma^{2}}+O\left(q^{3} \lambda^{3} / \sigma^{3}\right).
\end{equation}
\end{small}

To satisfy
\begin{small}
$
T \frac{q^{2} \tau^{2} \lambda^{2}}{\sigma^{2}} \leq \frac{\lambda \epsilon_{sgd}}{2},
$
\end{small}
and
\begin{small}
$
\exp \left(-\frac{\lambda \epsilon_{sgd}}{2}\right) \leq \delta_{sgd},
$
\end{small}
we set 
\begin{small}
\begin{equation}
\epsilon_{sgd}=c_{1} q^{2} \tau^{2} T,
\end{equation}
\end{small}
\begin{small}
\begin{equation}
\sigma=c_{2} \frac{q \tau \sqrt{T \log (1 / \delta_{sgd})}}{\epsilon_{sgd}}.
\end{equation}
\end{small}

Given that the input DP-APPR matrix costs additional $(\epsilon_{pr}, \delta_{pr})$ privacy budget, by using the standard composition theorem of DP,  the total privacy budget for the sampled graph $G$ is $(\epsilon_{sgd}+\epsilon_{pr}, \delta_{sgd}+\delta_{pr})$. Since $G$ is randomly sampled from the graph dataset $\overline{G}$, we can conclude the proof with the privacy amplification theorem of DP \cite{kasiviswanathan2011can, beimel2014bounds}.
\end{proof}

\vspace{-1em}
\begin{algorithm}
\small
\SetAlgoLined
\caption{DP-APPR using the Gaussian Mechanism (DP-APPR-GM)}
\label{alg.dp_appr_gm}
\KwIn{ISTA hyperparameters: $\gamma, \alpha, \rho$; privacy parameters: $\epsilon$, $\delta$; clip bound $C_1$, a graph $(V, E)$ where $V = \{v_1, ..., v_N\}$, an integer $K>0$ and an integer $M \in [1,N]$.} 
\textbf{Initialize} the APPR matrix $\boldsymbol{\Pi} \in \mathbb{R}^{M \times N}$ with all zeros. \\
\For{$i=1, ..., M$} {
\textbf{Compute APPR Vector:} \\
Compute the APPR vector $\mathbf{p}_{(v_i)}$ for node $v_i$ using ISTA; \\
\textbf{Clip Norm:} \\
$\hat{\mathbf{p}}_{(v_i)} \leftarrow \mathbf{p}_{(v_i)} / \max \left(1, \frac{\|\mathbf{p}_{(v_i)}\|_{2}}{C_1}\right)$ ; \\
\textbf{Add Noise:} \\
$\tilde{\mathbf{p}}_{(v_i)} \leftarrow \hat{\mathbf{p}}_{(v_i)} + \mathcal{N}(0, \sigma^2\mathbf{I})$, where $\sigma=\sqrt{2 \ln (1.25 / \delta)} C_1 / \epsilon$; \\
\textbf{Sparsification:} \\
$\tilde{\mathbf{p}}_{(v_i)}' \leftarrow$: select the top $K$ largest entries in $\tilde{\mathbf{p}}_{(v_i)}$ by setting all other entries with small values to zero. \\
\textbf{Replace} the $i$-th row of $\boldsymbol{\Pi}$ with $\tilde{\mathbf{p}}_{(v_i)}'$. \\
}
\Return $\boldsymbol{\Pi}$ and compute the overall privacy cost using the optimal composition theorem.
\end{algorithm}
\vspace{-1.5em}

\section{Datasets}
\label{append.dataset}
We evaluate our method on five graph datasets: Cora-ML \cite{bojchevski2017deep} which consists of academic research papers from various machine learning conferences and their citation relationships, Microsoft Academic graph \cite{shchur2018pitfalls} which contains scholarly data from various sources and the relationships between them, CS and Physics \cite{shchur2018pitfalls} which are co-authorship graphs, Reddit \cite{hamilton2017inductive} which is constructed from Reddit posts, where edges represent connections between posts when the same user commented on both. Table \ref{tab:datasets} shows the statistics of the five datasets.

\vspace{-1em}
\begin{table}[h!]
\setlength{\tabcolsep}{2pt}
\caption{Dataset statistics}
\label{tab:datasets}
\begin{tabular}{|c|c|c|c|c|c|c|c|}
\hline
Dataset & Cora-ML & MS Academic & CS & Reddit & Physics \\
\hline
Classes & 7 & 15 & 15 & 8 & 8 \\
\hline
Features & 2,879  & 6,805 & 6,805 & 602 & 8,415 \\
\hline
Nodes & 2,995 & 18,333 & 18,333 & 116,713 & 34,493\\
\hline
Edges & 8,416 & 81,894 & 327,576 & 46,233,380 & 495,924\\
\hline
\end{tabular}
\end{table}
\vspace{-1.5em}

\section{Illustration of Privacy Protection}
\label{append.priv_protect_visual}
To provide an intuitive illustration of the privacy protection provided by the DP trained models using our methods, we visualize the t-SNE clustering of training nodes' embeddings generated by the private models with varying $\epsilon$ values in Figure \ref{fig:cora_clustering} for the Cora-ML dataset. We omit the results for other datasets as they display a similar pattern leading to the same conclusion. The color of each node corresponds to the label of the node. We can observe that when the privacy budget is small ($\epsilon=1$), the model achieves strong privacy protection, thus it becomes hard to distinguish the training nodes belonging to different classes from each other. Meanwhile, when the privacy guarantee becomes weak ($\epsilon$ becomes larger), embeddings of nodes with the same class label are less obfuscated, hence gradually forming a cluster. This observation demonstrates that the privacy budget used in our proposed methods is correlated with the model's ability to generate private node embeddings, and therefore also associated with the privacy protection effectiveness against adversaries utilizing the generated embeddings to carry out privacy attacks~\cite{fredrikson2015model,li2020adversarial}.

\begin{figure*}[!htbp]
\centering
\subfigure[\textbf{DPAR-GM}, $\epsilon=1$]{
\includegraphics[width=1.25in]{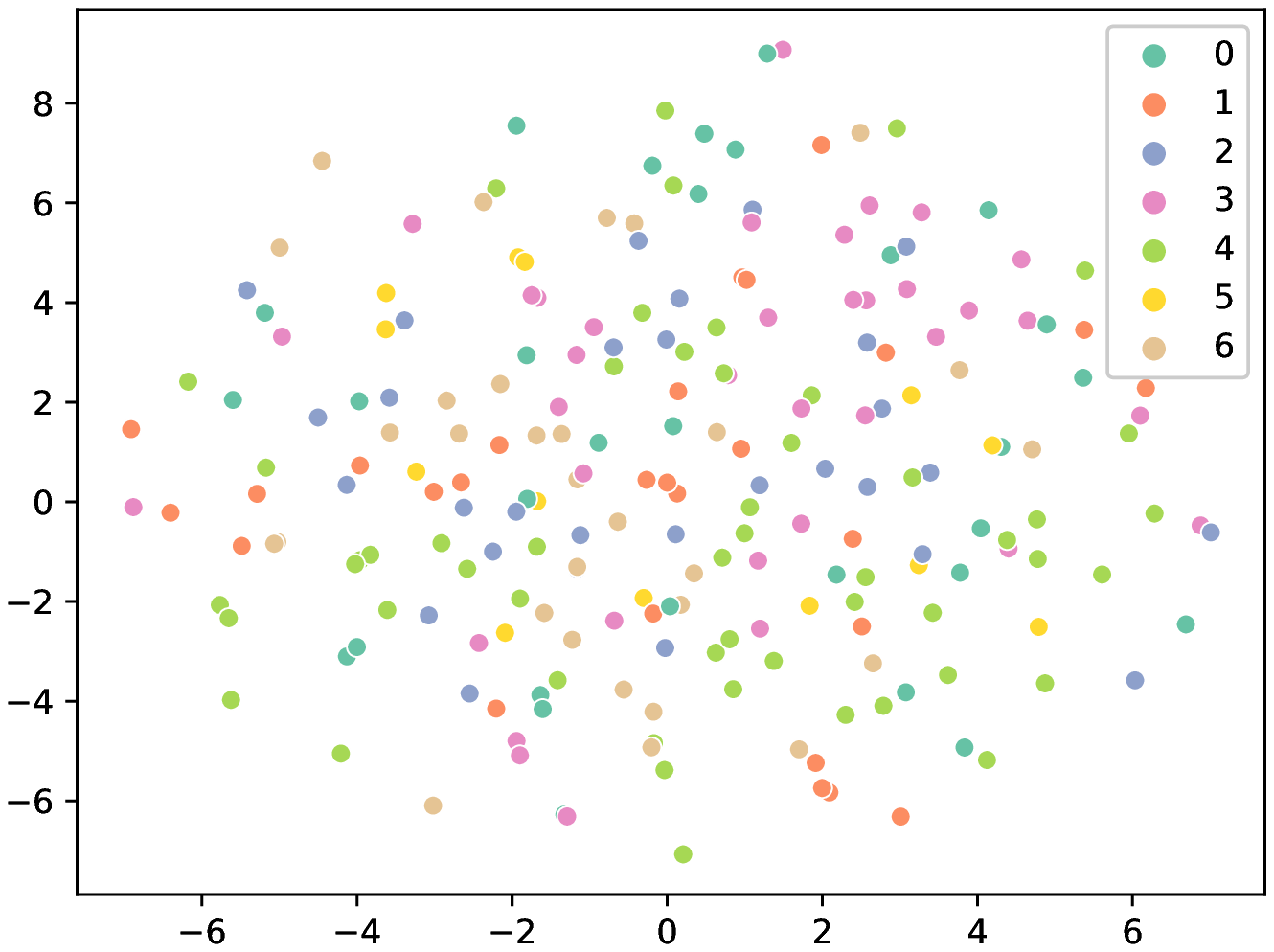}
}
\hspace{-6.3mm}
\subfigure[\textbf{DPAR-GM}, $\epsilon=4$]{
\includegraphics[width=1.25in]{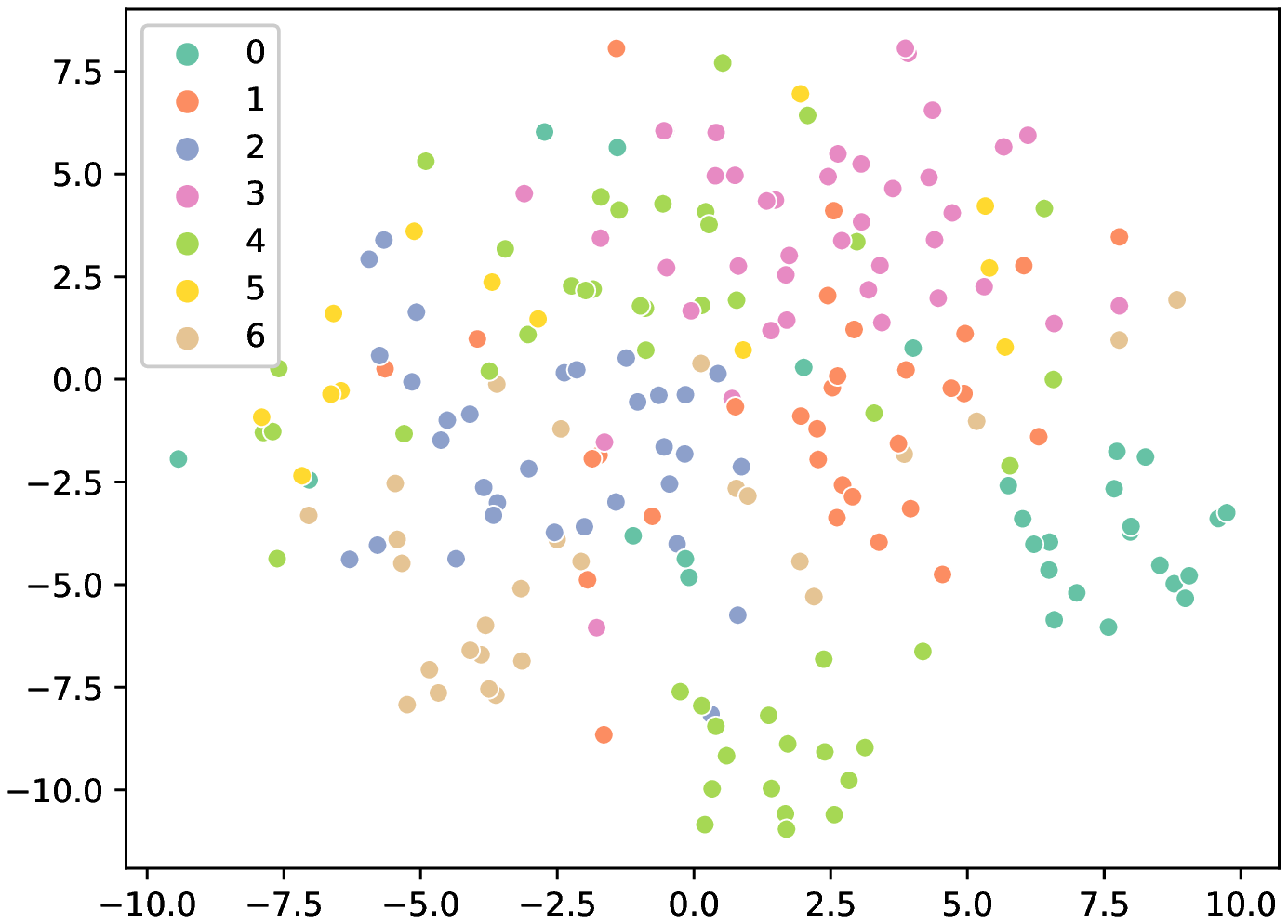}
}
\hspace{-6.3mm}
\subfigure[\textbf{DPAR-GM}, $\epsilon=16$]{
\includegraphics[width=1.25in]{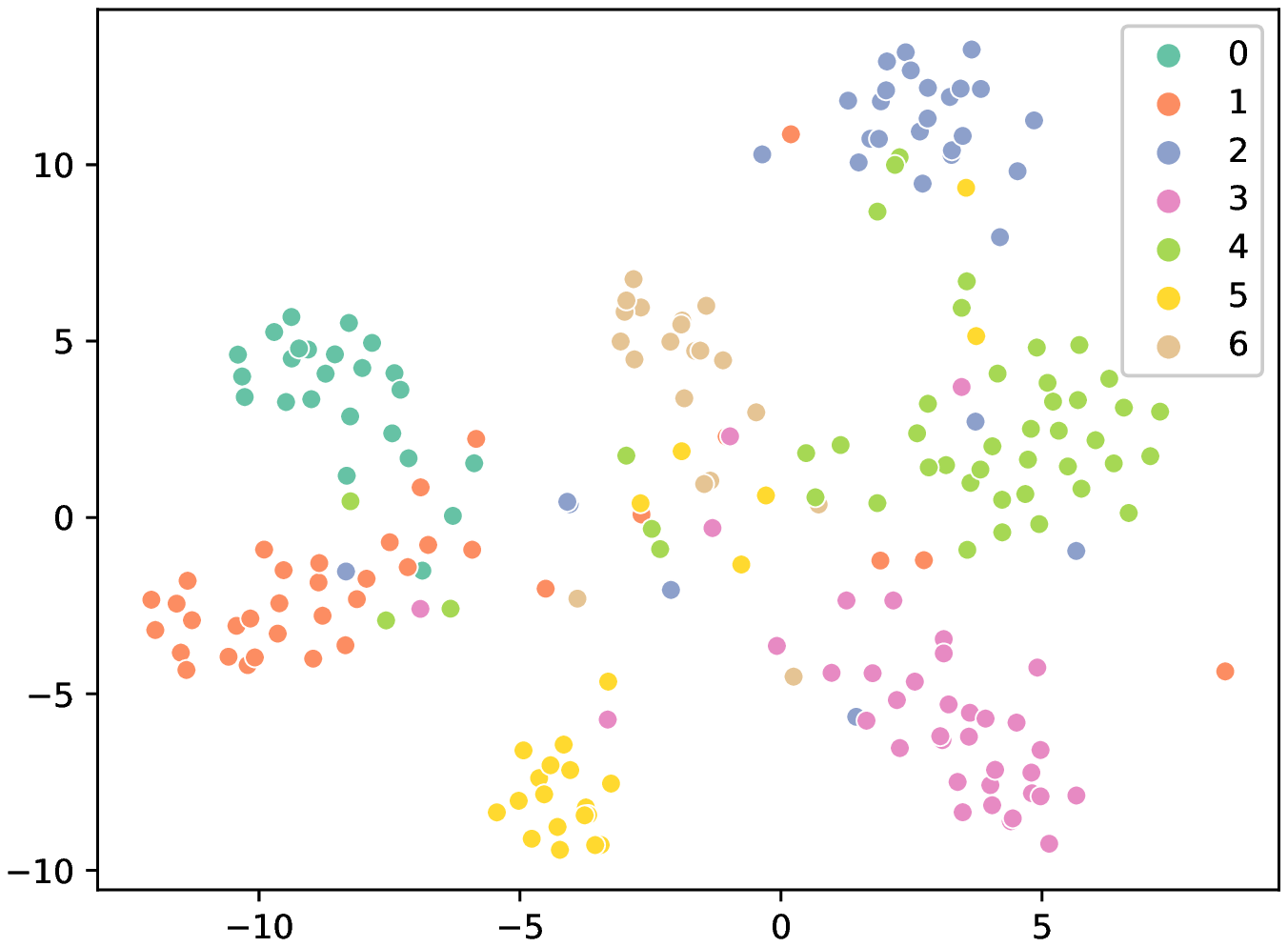}
}
\hspace{-6.3mm}
\subfigure[\textbf{DPAR-EM1}, $\epsilon=1$]{
\includegraphics[width=1.25in]{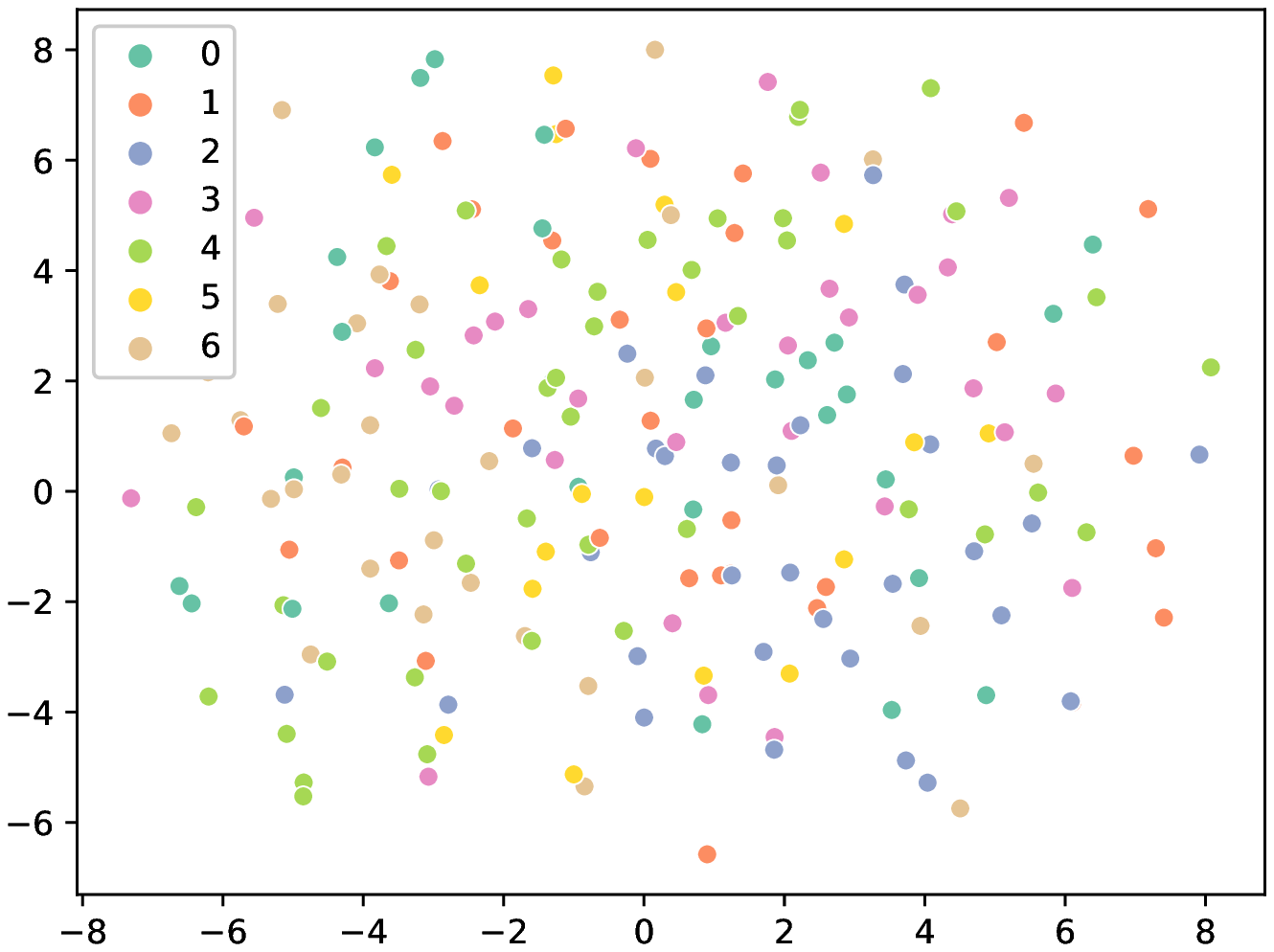}
}
\hspace{-6.3mm}
\subfigure[\textbf{DPAR-EM1}, $\epsilon=4$]{
\includegraphics[width=1.25in]{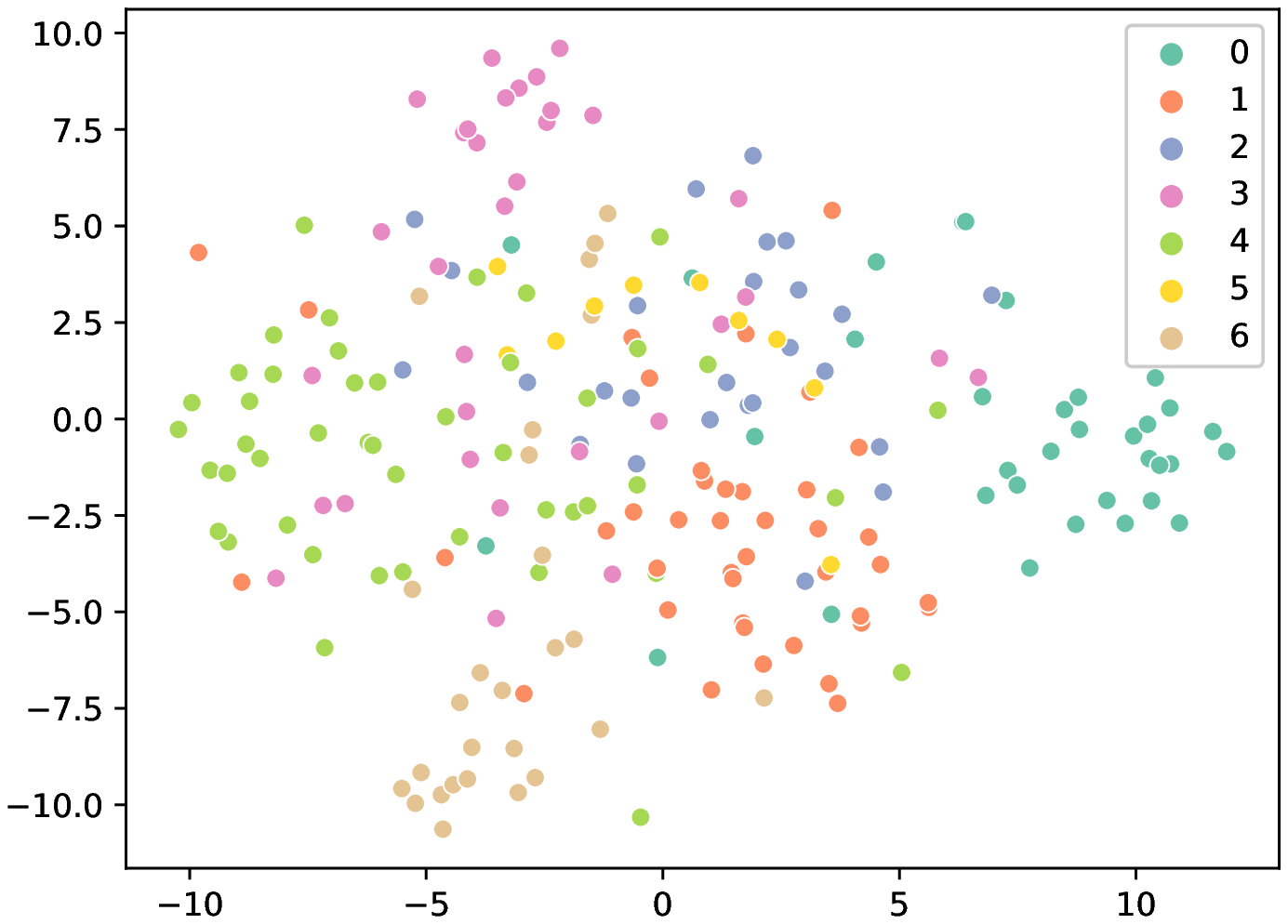}
}
\hspace{-6.3mm}
\subfigure[\textbf{DPAR-EM1}, $\epsilon=16$]{
\includegraphics[width=1.25in]{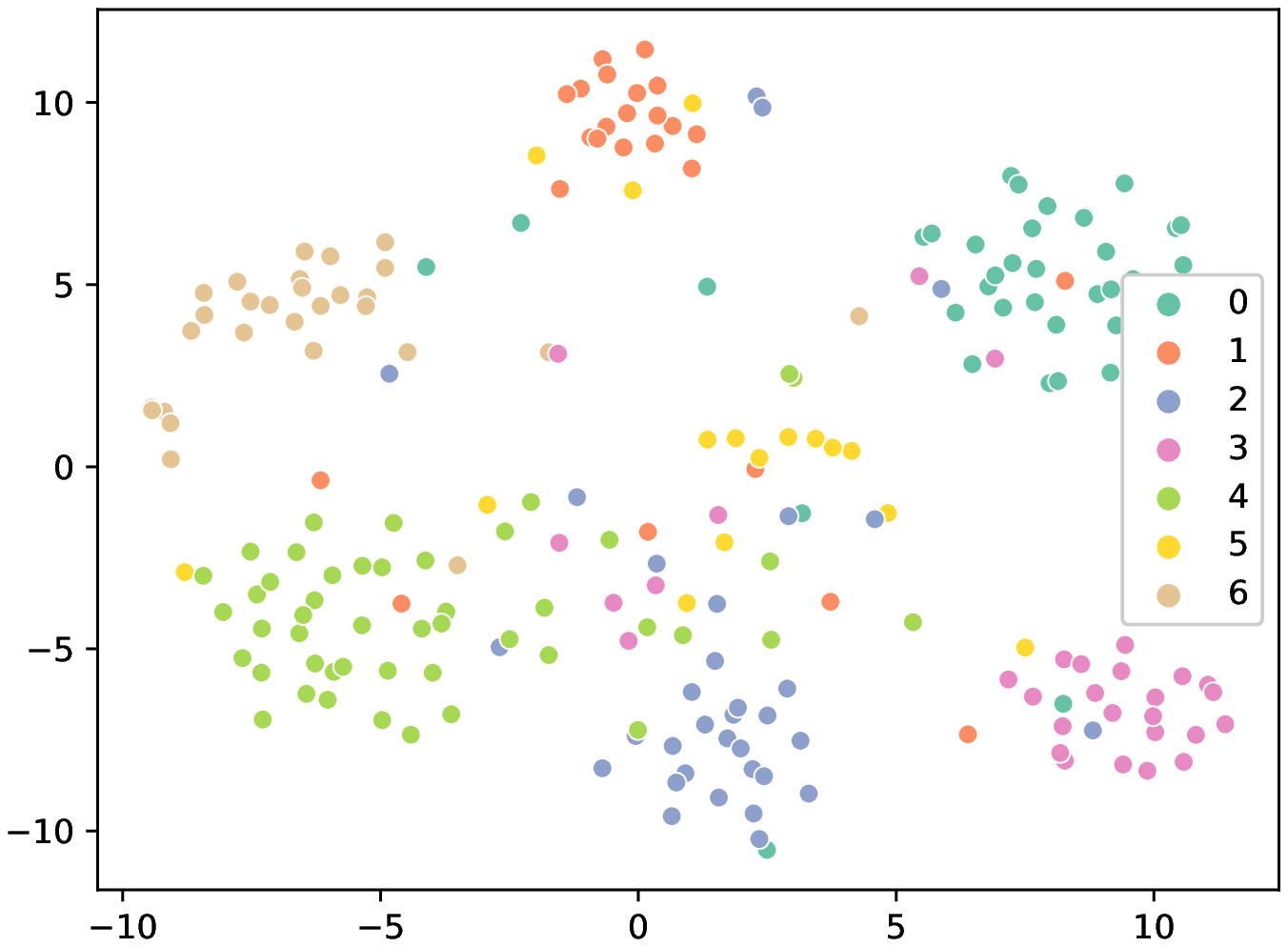}
}
\vspace{-1em}
\caption{Cora-ML. Clustering of training nodes' embeddings generated by private models with different privacy guarantees $\epsilon$ (fixed $\delta=2 \times 10^{-3}$) and training methods.
}
\label{fig:cora_clustering}
\end{figure*}

\section{More Results on Effects of Privacy Parameters}
\label{append.priv_protect_visual}
\partitle{Batch Size in DP-SGD ($B$)}
Figure \ref{fig:cora_para_effects} shows batch size impact on model test accuracy. According to Theorem \ref{theo.privacy_gnn}, with fixed privacy budget and epochs, Gaussian noise's standard deviation scales with the batch size's square root, increasing gradient noise for larger batches. However, larger batches may provide more accurate updates by encompassing more nodes and correlations. Thus, the curve remains relatively flat for batch sizes not too small.

\partitle{Clipping Bound in DP-SGD ($C$)}
Figure \ref{fig:cora_clipSGD} shows the effect of gradient norm clipping bound $C$ in DP-SGD on the model's test accuracy. The clipping bound affects the noise scale added to the gradients (linearly) as well as the optimization direction of model parameters. A large clipping bound may involve too much noise to the gradients, while a small clipping bound may undermine gradients' ability for unbiased estimation. The result verifies this phenomenon.  We use $C$= 1 for all datasets in our experiments.

\partitle{Number of Nodes in DP-APPR  ($M$)}
During the DP-APPR algorithm, a subset of $M$ nodes is randomly sampled from the input training graph. Figure \ref{fig:cora_vary_m} illustrates the relationship between $M$ and test accuracy under different total privacy budgets ($\epsilon$=1 and $\epsilon$=8, with $\delta=2 \times 10^{-3}$). As $M$ increases, the privacy budget allocated for calculating each DP-APPR vector decreases. This leads to more noise in each DP-APPR vector, which can adversely affect its utility and result in lower accuracy as observed. However, too small of an $M$ will degrade the performance since it will not contain enough information about the graph structure. In our experiments, we set $M$ = 70 for all datasets.

\begin{figure}[t]
\centering
\subfigure{
\includegraphics[width=0.18\textwidth]{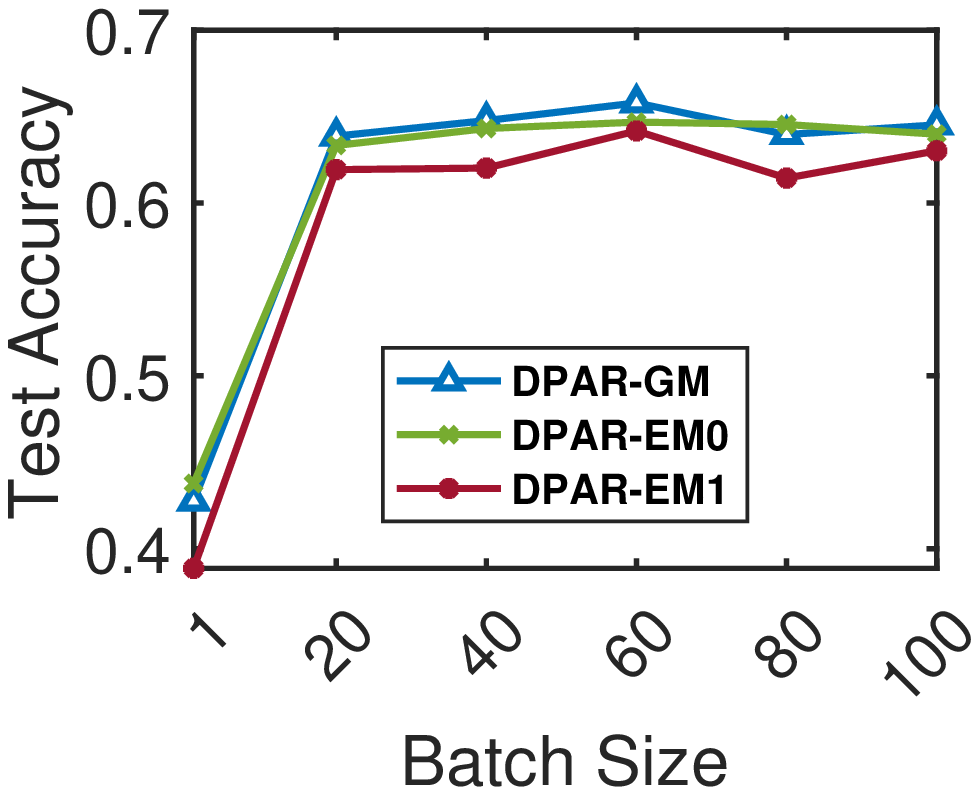}
}
\hspace{-2mm}
\subfigure{
\includegraphics[width=0.18\textwidth]{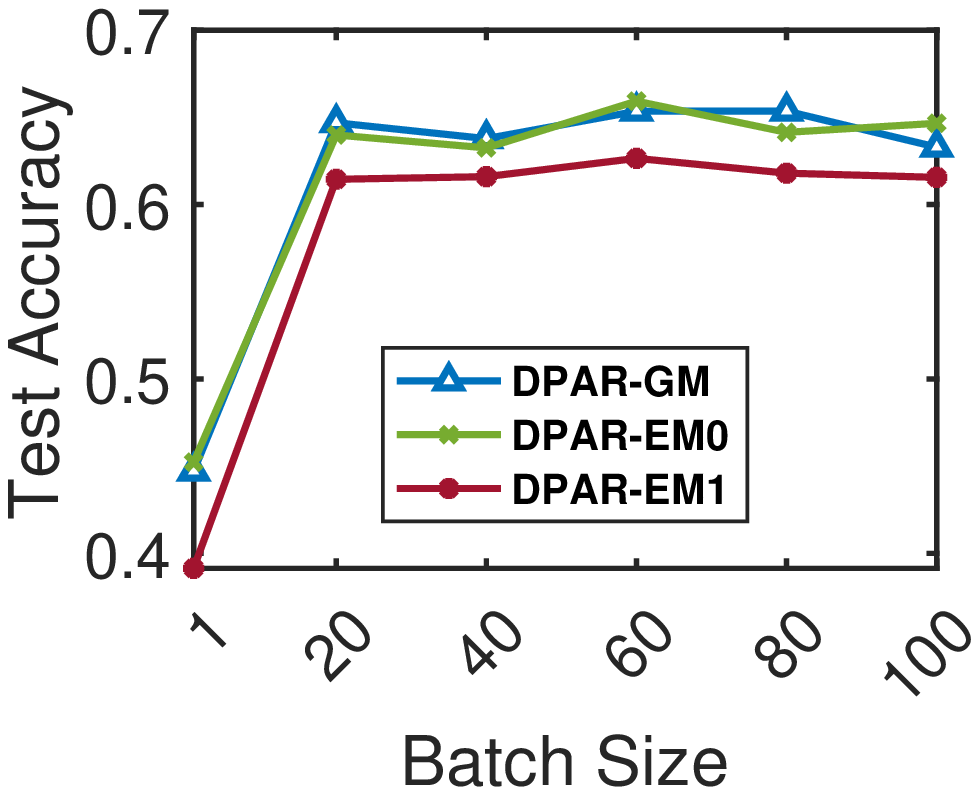}
}
\vspace{-1em}
\caption{Cora-ML. Batch size vs. model test accuracy. Fix total privacy budget $(\epsilon, \delta)$ = (8, $2 \times 10^{-3}$). K=4 (left), K=16 (right)}
\label{fig:cora_para_effects}
\end{figure}

\section{Generalization to Various Types of Graphs}
DPAR proposed in this paper focuses on homogeneous graphs, including both homophilous and non-homophilous graphs, and can be applied in various domains such as social networks, recommendation systems, knowledge graphs, drug discovery, and traffic network analysis. Additionally, DPAR holds the potential for generalization to diverse graph types, including dynamic graphs, heterogeneous graphs, and those with high-dimensional features. 
For instance, in dynamic graphs, DPAR's decoupling strategy is well-regarded for its efficiency in addressing the high computational complexity often encountered in dynamic graph learning \cite{li2023zebra,hou2023personalized}. Consequently, we can adapt the existing framework of DPAR by integrating established temporal differential privacy mechanisms \cite{lv2021we,liu2022dp}, which effectively manage specific challenges like temporal correlations among identical nodes across varying graph snapshots. In the context of heterogeneous graphs, prior research \cite{lv2021we} demonstrates that homogeneous GNNs, like GCN and GAT, can process heterogeneous graphs by simply disregarding node and edge types. This finding suggests that extending DPAR to accommodate heterogeneous graphs, while concurrently implementing additional privacy safeguards for type information during type embedding learning, could yield favorable outcomes.

\section{Complexity of DPAR}
DPAR has linear computational complexity corresponding to the number of nodes and the node feature dimension. We elaborate as follows. In DP-APPR (Algorithm \ref{alg.dp_appr_em} and Algorithm \ref{alg.dp_appr_gm}), we calculate the APPR vector using ISTA \cite{fountoulakis2019variational}. Based on Theorem 3 in \cite{fountoulakis2019variational}, the time complexity of ISTA for calculating the APPR vector depends only on the number of non-zeros of the calculated APPR vector, unlike calculations based on the entire graph. For each APPR vector, the steps of clipping the norm, adding noise, and reporting noisy indexes have the worst-case time complexity that is linear to the number of nodes in the input graph. Since we calculate $M$ DP-APPR vectors, the overall time complexity for DP-APPR algorithms is $O(MN) = O(N) (N \gg M)$ ($N$ is the number of nodes), which indicates linear time complexity. In Algorithm \ref{alg.dp_gnn}, where we train the DP-GNN models using the node feature vectors and DP-APPR matrix, the model is a 2-layer MLP with each layer’s size equal to 32. Therefore, the time complexity for each iteration is mainly bounded by the node feature dimension $D$ ($D$ $\gg$ 32). In conclusion, the overall time complexity for DPAR is $O(N+D)$, linearly related to the number of nodes and the node feature dimension.

\begin{figure}[t]
\centering
\subfigure[K=4]{
\includegraphics[width=0.18\textwidth]{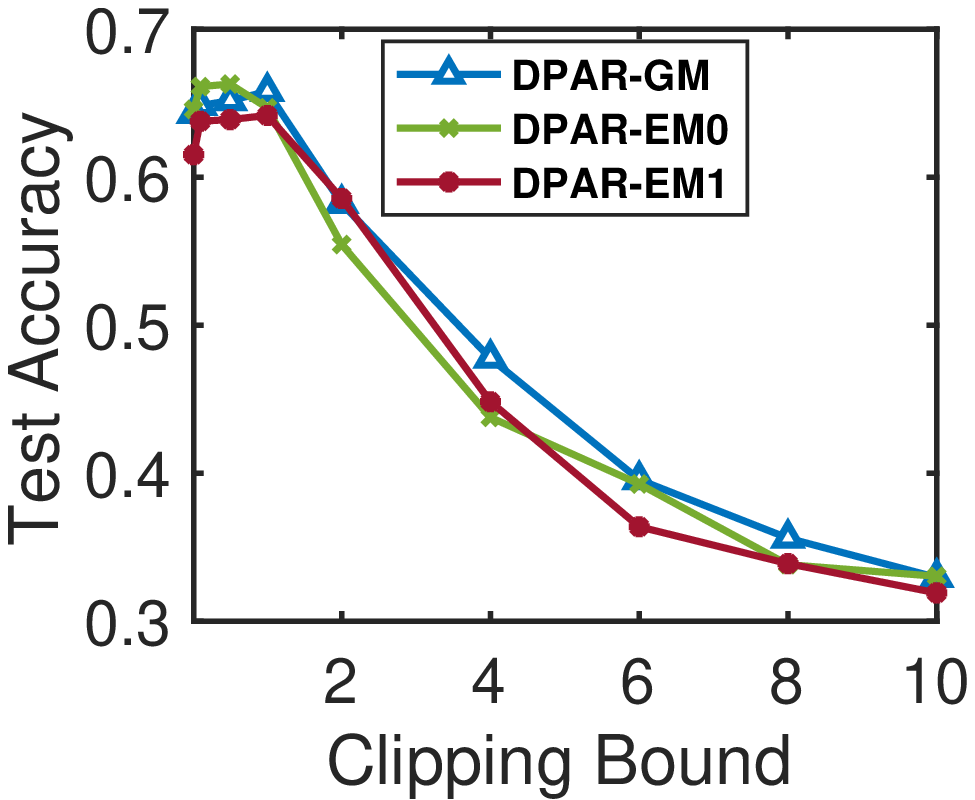}
}
\subfigure[K=16]{
\includegraphics[width=0.18\textwidth]{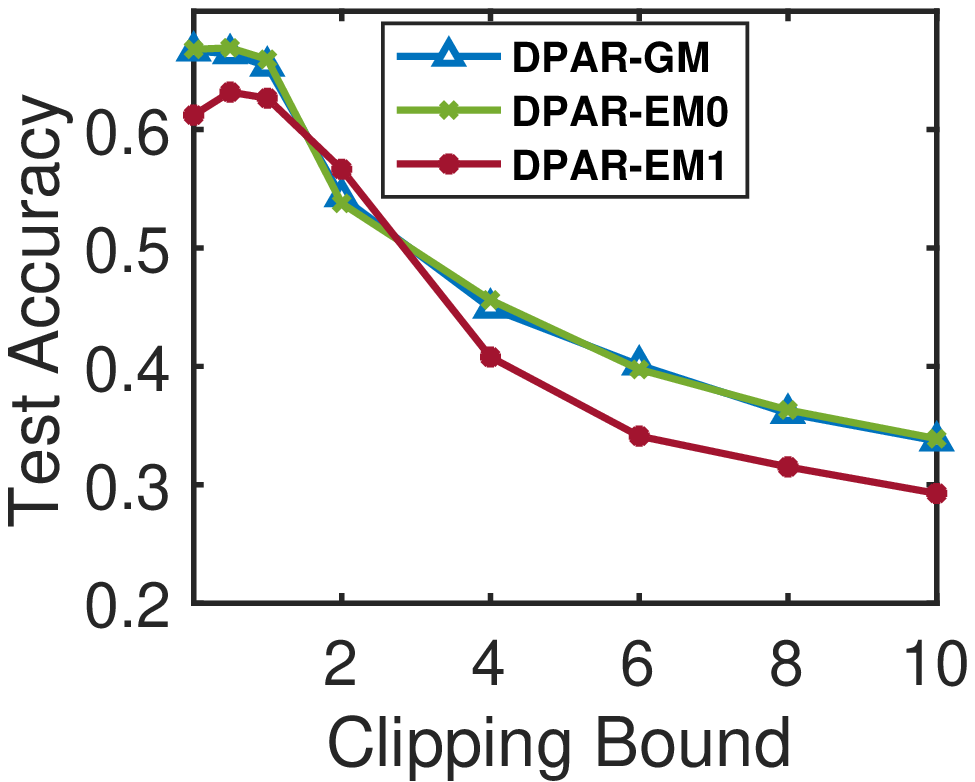}
}
\vspace{-1em}
\caption{Cora-ML. Relationship between clipping bound of DP-SGD and model test accuracy. Fix total privacy budget $(\epsilon, \delta)$ = (8, $2 \times 10^{-3}$).}
\label{fig:cora_clipSGD}
\end{figure}

\begin{figure}[t]
\centering
\includegraphics[width=0.25\textwidth]{figures/legend_appr.eps}
\\
\vspace{-0.6em}
\subfigure[$\epsilon$ = 1.0]{
\includegraphics[width=0.18\textwidth]{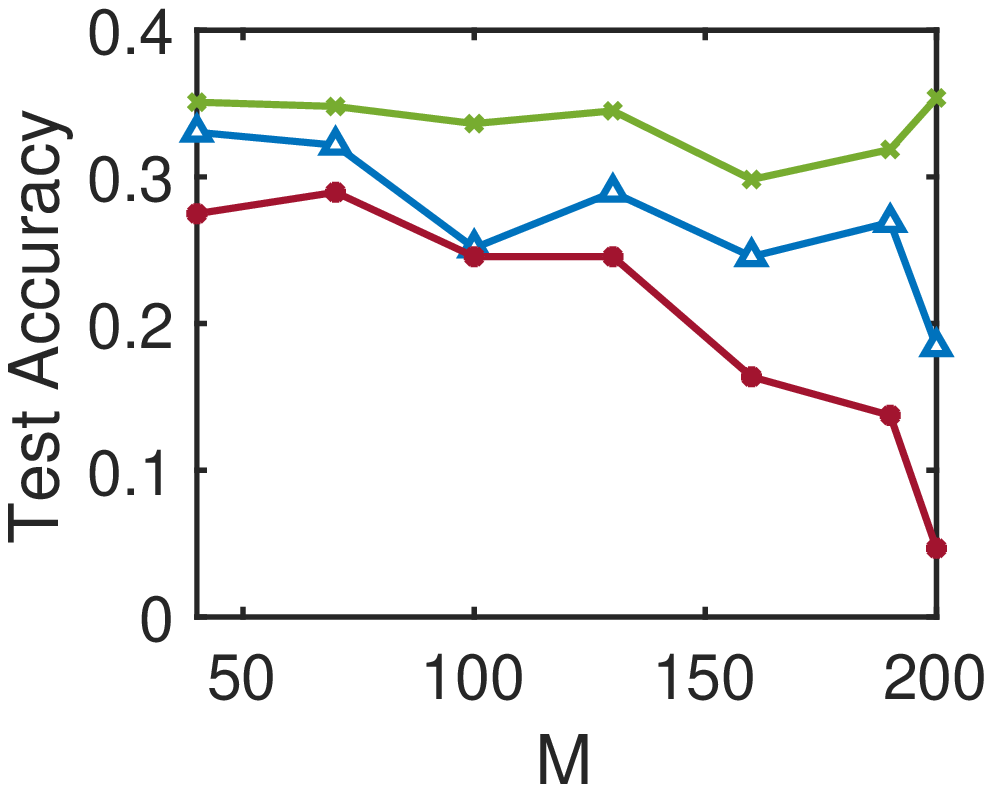}
}
\subfigure[$\epsilon$ = 8.0]{
\includegraphics[width=0.18\textwidth]{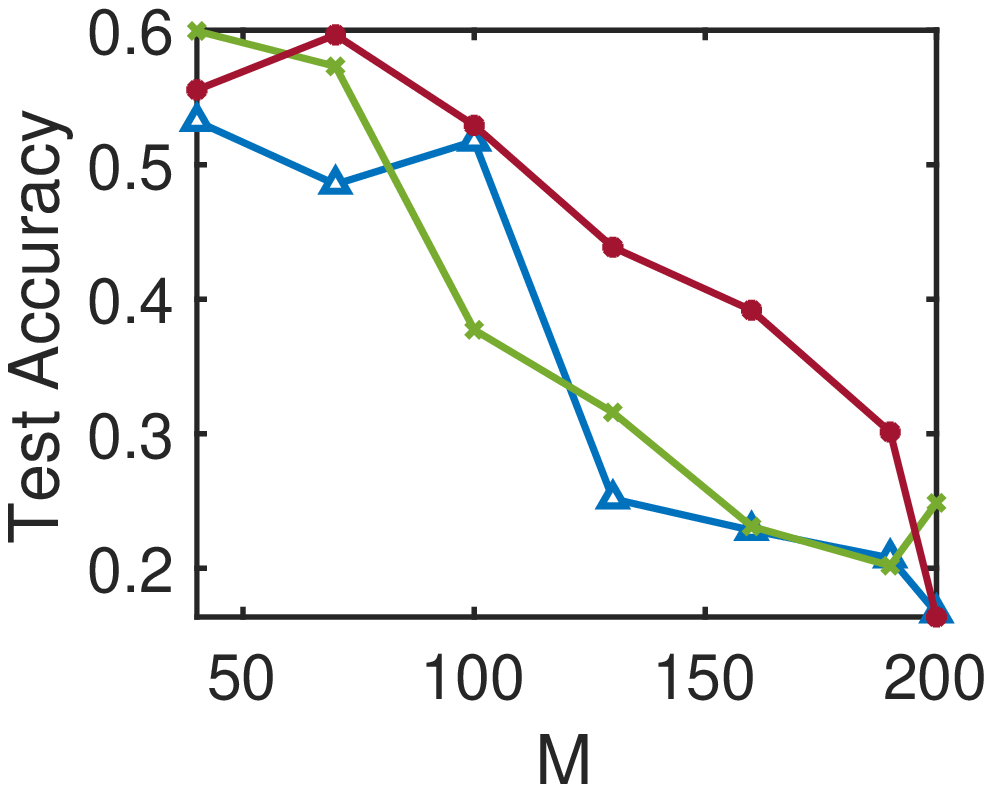}
}
\vspace{-1em}
\caption{Cora-ML: Relationship between the number of nodes M in DP-APPR vector calculation and model test accuracy. K = 2.}
\label{fig:cora_vary_m}
\end{figure}

\end{document}